\DeclareMathAlphabet{\mathpzc}{OT1}{pzc}{m}{it}
\def\BibTeX{{\rm B\kern-.05em{\sc i\kern-.025em b}\kern-.08em
    N\kern-.1667em\lower.7ex\hbox{E}\kern-.125emX}}
\newcommand{\highlight}[1]{\vspace{1mm}\noindent{}\textbf{#1}}
\newcommand{\w}[1][]{
\ifthenelse{\isempty{#1}}
{\mathbf{w}}
{\mathbf{w}^{(#1)}}}
\newcommand{\tw}[1][]{
\ifthenelse{\isempty{#1}}
{\widetilde{\mathbf{w}}}
{\widetilde{\mathbf{w}}^{(#1)}}}
\newcommand{\hw}[1][]{
\ifthenelse{\isempty{#1}}
{\widehat{\mathbf{w}}}
{\widehat{\mathbf{w}}^{(#1)}}}
\newcommand{\X}{\mathbf{X}}
\newcommand{\hX}{\hat{\mathbf{X}}}
\newcommand{\x}{\mathbf{x}}
\newcommand{\hx}{\hat{\mathbf{x}}}
\newcommand{\bx}{\breve{\mathbf{x}}}
\newcommand{\z}{\mathbf{z}}
\newcommand{\set}{\mathcal{S}}
\newcommand{\recon}{\boldsymbol{\psi}}
\newcommand{\Recon}{\boldsymbol{\Psi}}
\newcommand{\update}[1][k]{\boldsymbol{g}^{(#1)}}
\newcommand{\tupdate}[1][k]{\widetilde{\boldsymbol{g}}^{(#1)}}
\newcommand{\mG}{\mathbf{G}}
\newcommand{\prob}{\mathbb{P}}
\newcommand{\argmin}{\mathop{\mathrm{argmin}}\limits} 
\newcommand{\expect}[1][]{
\ifthenelse{\isempty{#1}}
{\mathbb{E}}
{\mathbb{E}\left[#1\right]}}
\newcommand{\fw}[1][]{
\ifthenelse{\isempty{#1}}
{\mathbf{\bar{w}}}
{\mathbf{\bar{w}}^{(#1)}}
}
\newcommand{\normsq}[1]
{\big\| #1\big\|_2^2}
\newcommand{\noise}{\mathbf{n}}
\newcommand{\pdist}[1][m]{\mathcal{P}_{m}}
\newcommand{\p}[1][]{
\ifthenelse{\isempty{#1}}
{\boldsymbol{p}}
{\boldsymbol{p}^{(#1)}}
}
\newcommand{\cirone}
{\text{\ding{172}}}
\newcommand{\cirtwo}
{\text{\ding{173}}}
\newcommand{\cirthree}
{\text{\ding{174}}}
\newcommand{\intab}[2][0.75]{
\scalebox{#1}{\textrm{#2}}
}
\newcommand\addpicture[3]{%
\setbox\mybox=\hbox{\includegraphics[scale=#3]{#2}}
\myboxwidth\wd\mybox    
\renewcommand\windowpagestuff{%
\includegraphics[scale=#3]{#2}
\captionof{figure}{A test figure.}}
\parpic[#1]{%
\begin{minipage}{\myboxwidth}
 \windowpagestuff 
\end{minipage} 
} }
\newenvironment{proof}[1][]{
\ifthenelse{\isempty{#1}}
{\par\vspace*{-1mm}\noindent\textit{Proof.} }
{\par\vspace*{-2mm}\noindent\textit{Proof of #1.} }}
{\hfill$\square$ \vspace*{2mm}}
\theoremstyle{mystyle} 
\newtheorem{Theorem}{Theorem}
\newtheorem{Remark}{Remark}
\newtheorem{Definition}{Definition}
\newtheorem{Example}{Example}
\newcommand{\enc}{E}
\newcommand{\dec}{D}
\definecolor{fedvote}{RGB}{235,255,251}
\definecolor{byzantinefedvote}{RGB}{235,246,255}
\newtheorem{Assumption}{Assumption}
\definecolor{mygreen}{RGB}{46, 125, 50}
\definecolor{myorange}{RGB}{230, 74, 25}
\newcommand{\com}[1]{\textbf{\color{blue}([KY]: #1)}}
\newcommand{\comRJ}[1]{\textbf{\color{red}([RJ]: #1)}}
\newcommand{\CommentWong}[1]{\textcolor[rgb]{1,0,0}{[Wong: #1]}}
\newcommand{\comHD}[1]{\textbf{\color{red}([HD]: #1)}}
\newcommand{\com}[1]{}
\newcommand{\comRJ}[1]{}
\newcommand{\CommentWong}[1]{}
\newcommand{\comHD}[1]{}
\begin{document}
%
\title{Byzantine Outside, Curious Inside: \\ Reconstructing Data Through Malicious Updates}

\newcommand*{\affaddr}[1]{#1} 
\newcommand*{\affmark}[1][*]{\textsuperscript{#1}}
\newcommand*{\email}[1]{\texttt{#1}}

\author{%
Kai Yue,\affmark[1] Richeng Jin,\affmark[2] Chau-Wai Wong,\affmark[1] and Huaiyu Dai\affmark[1]\\
\affaddr{\affmark[1]North Carolina State University; \{kyue, chauwai.wong, hdai\}@ncsu.edu } \\
\affaddr{\affmark[2]Zhejiang University; richengjin@zju.edu.cn}\\
}

\maketitle


\begin{abstract}

Federated learning~(FL) enables decentralized machine learning without sharing raw data, allowing multiple clients to collaboratively learn a global model. 
However, studies reveal that privacy leakage is possible under commonly adopted FL protocols. 
In particular, a server with access to client gradients can synthesize data resembling the clients' training data. 
In this paper, we introduce a novel threat model in FL, named the maliciously curious client, where a client manipulates its own gradients with the goal of inferring private data from peers. 
This attacker uniquely exploits the strength of a Byzantine adversary, traditionally aimed at undermining model robustness, and repurposes it to facilitate data reconstruction attack.  
We begin by formally defining this novel client-side threat model and providing a theoretical analysis that demonstrates its ability to achieve significant reconstruction success during FL training. 
To demonstrate its practical impact, we further develop a reconstruction algorithm that combines gradient inversion with malicious update strategies. 
Our analysis and experimental results reveal a critical blind spot in FL defenses: both server-side robust aggregation and client-side privacy mechanisms may fail against our proposed attack.
Surprisingly, standard server- and client-side defenses designed to enhance robustness or privacy may unintentionally amplify data leakage. 
Compared to the baseline approach, a mistakenly used defense may instead improve the reconstructed image quality by $10$--$15\%$. 

\end{abstract}

\section{Introduction}\label{sec:introduction}

Federated learning (FL) enables multiple clients to collaboratively train a shared model without sharing raw data, offering a promising alternative to centralized machine learning~\cite{kairouz2021advances, banabilah2022federated}. 
In applied scenarios, regulatory and organizational constraints often prohibit the direct transfer of sensitive data among different entities. 
FL addresses this limitation by keeping clients' data local while leveraging their distributed computational resources.  
A typical FL setup is composed of a central server and a group of participating clients.  
Each client has its own private dataset and performs local training using the global model broadcast by the server. 
Once local updates are completed, clients transmit their models or gradients to the server, where a specified aggregation rule is applied to generate an updated global model. 

Despite its advantages over centralized machine learning, FL still faces key challenges in network security, data privacy, and system robustness~\cite{zhang2024survey, chen2025trustworthy}.
Among the most well-studied threats are server-side data reconstruction attacks and client-side Byzantine attacks. 
In server-side reconstruction attacks, an honest-but-curious server attempts to infer sensitive information about a client's training data~\cite{zhu2019deep, zhao2020idlg, yin2021see}.
Although FL protocols are designed to keep user data local, previous studies have shown that this alone does not guarantee privacy, as significant information can still be leaked through gradient sharing~\cite{zhang2024survey, du2024sok}. 
Specifically, high-fidelity input samples can be reconstructed from client gradients, particularly in the early stages of training when models are poorly generalized~\cite{wang2024more}. 
Since the server has direct access to the gradients uploaded by each client, it is relatively straightforward for this adversary to inspect sensitive information from individual clients. 
Some studies remove the assumption of server honesty, allowing it to modify the structures or parameters of the global model without following standard FL protocols, which can exacerbate data leakage and increase the fidelity of the reconstructed data~\cite{wen2022fishing, boenisch2023curious}.

In practice, assuming a threat model of a fully compromised server as the source of data leakage is oftentimes too strong and unrealistic.  
The detectability of server-side attacks further poses limitations on these threat models~\cite{du2024sok, garovhiding2024hiding}. 
Thus, adversarial attacks on the client side have garnered significant research interest, owing to their practical feasibility~\cite{lyu2022privacy, shi2022challenges}.  
Byzantine attacks are among the most extensively studied and well-known threats in this domain~\cite{shejwalkar2022back, li2023experimental}.
Specifically,  Byzantine attacks involve malicious clients acting arbitrarily or adversarially with the goal of degrading the global model. 
The adversarial client behavior may include data poisoning and gradient manipulation, which can significantly disrupt the convergence of the model when classical aggregation methods such as FedAvg~\cite{mcmahan2017communication} are used. 
Due to the heterogeneous nature of FL, such adversarial behavior is relatively difficult to detect. 
Researchers have found that a small fraction of adversarial clients may significantly degrade model performance~\cite{blanchard2017machine, yin2018byzantine}. 

\begin{figure*}
    \centering
    \captionsetup[subfigure]{aboveskip=20pt}
    \captionsetup{aboveskip=4pt, belowskip=-1pt}
    \subcaptionbox{\label{subfig:a}}[0.32\textwidth]{
        \begin{overpic}[width=\linewidth]{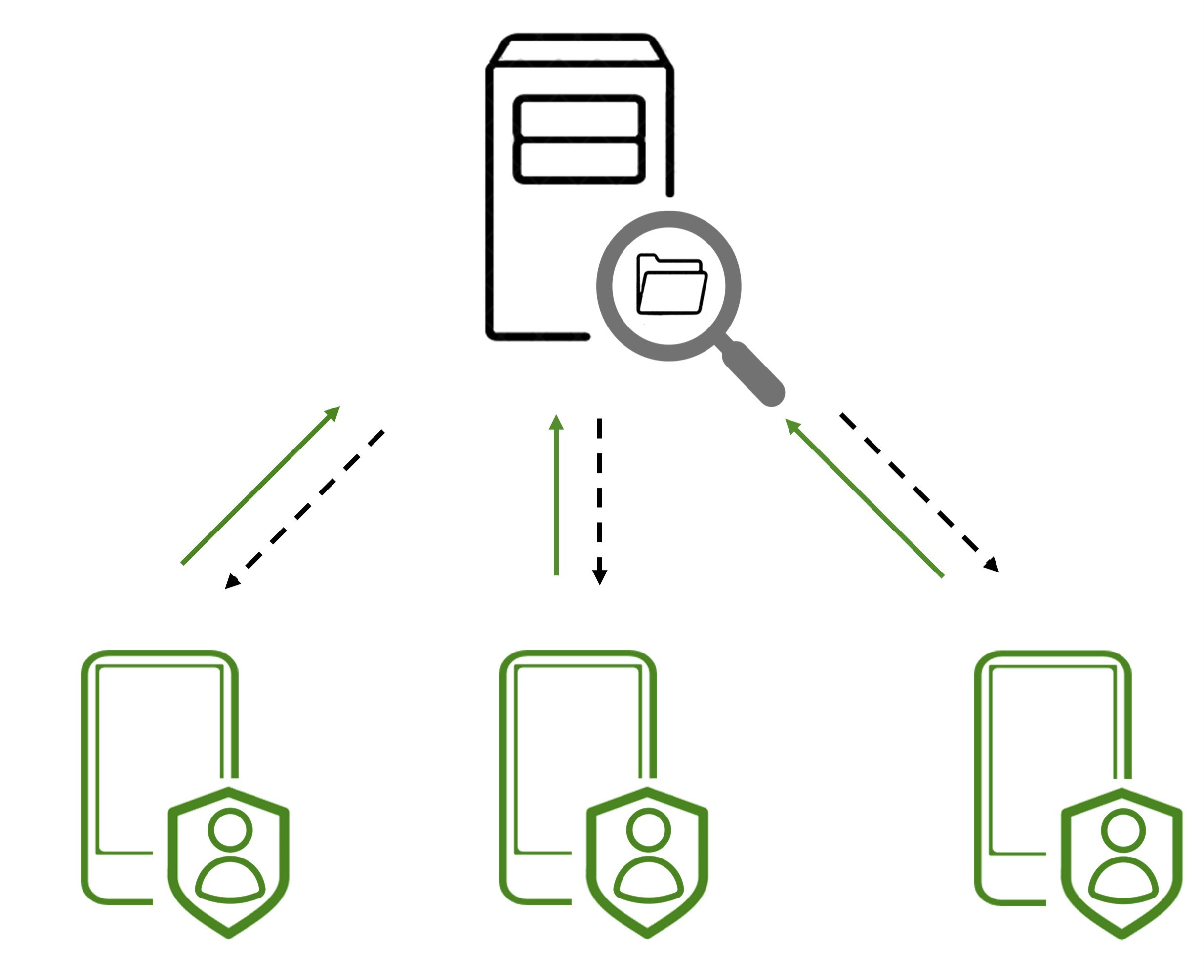}
            \put(5, -3){\intab[0.6]{Client $1$}}
            \put(40, -3){\intab[0.6]{Client $2$}}
            \put(78, -3){\intab[0.6]{Client $m$}}

            \put(60, 74){\intab[0.6]{Honest-but-curious Server}}
            \put(60, 69){\intab[0.6]{Gradient Inversion}}
            \put(59, 62.5){\intab[0.6]{(Ineffective Attack)}}

            \put(20.5, 15){\intab[0.6]{DP}}
            \put(54.9, 15){\intab[0.6]{DP}}
            \put(93.9, 15){\intab[0.6]{DP}}

            \put(2.5, 47){\intab[0.6]{Noisy}}
            \put(2.5, 42){\intab[0.6]{Gradient}}

            \put(76, 47){\intab[0.6]{Global}}
            \put(76, 42){\intab[0.6]{Model}}

        \end{overpic}
    }\hfill
    \subcaptionbox{\label{subfig:b}}[0.32\textwidth]{
        \begin{overpic}[width=\linewidth]{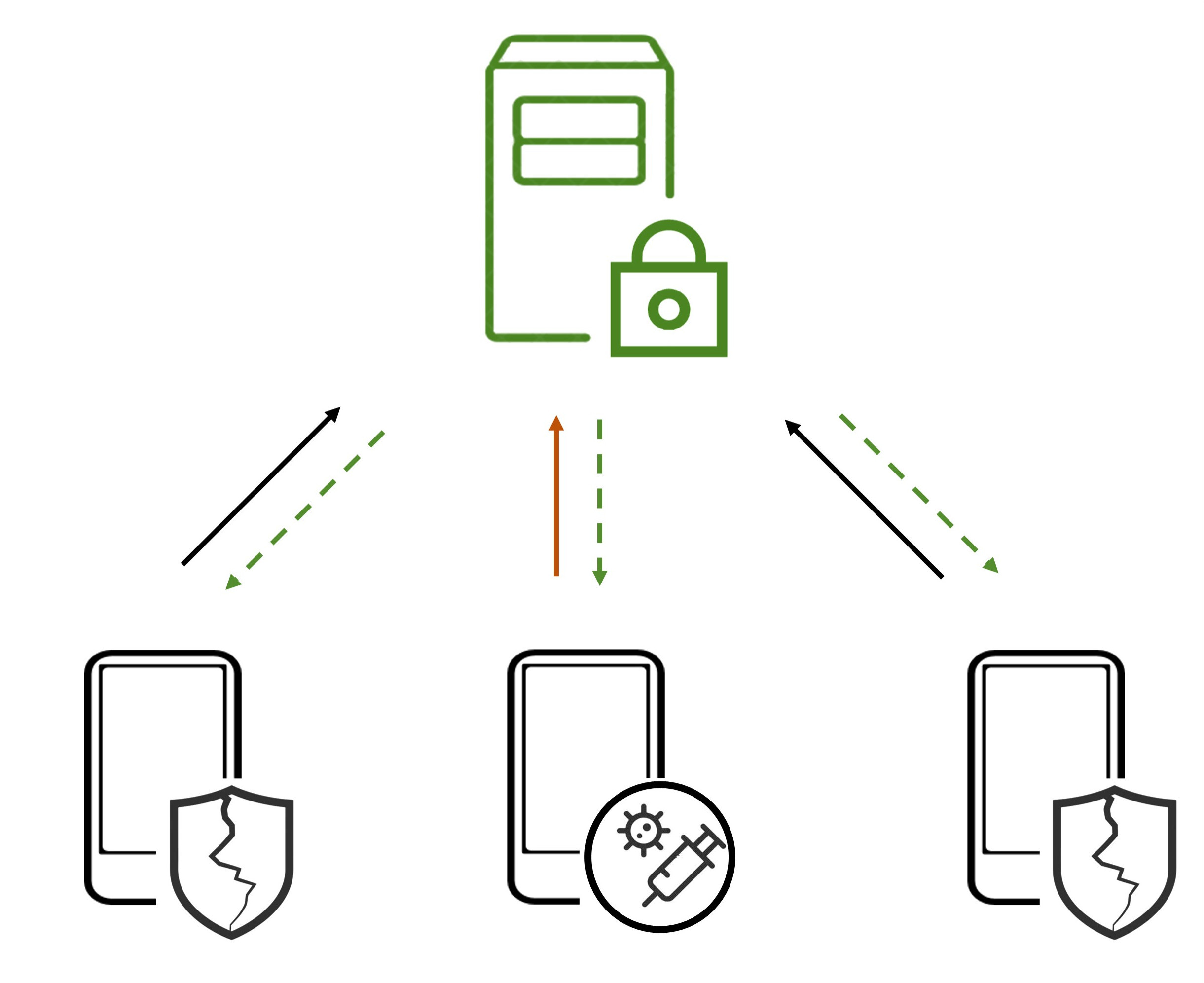}
            \put(5, -5.5){\intab[0.6]{Client $1$}}
            \put(40, -3){\intab[0.6]{Byzantine}}
            \put(40, -8){\intab[0.6]{Attacker}}
            \put(78, -5.5){\intab[0.6]{Client $m$}}

            \put(60, 74){\intab[0.6]{Byzantine-robust}}
            \put(60, 69){\intab[0.6]{Aggregation}}

            \put(24, 29.5){\intab[0.6]{Poisoning~(Ineffective Attack)}}

            \put(70, 52.5){\intab[0.6]{(Robustness $\uparrow$)}}
            \put(76, 47){\intab[0.6]{Global}}
            \put(76, 42){\intab[0.6]{Model}}

            \put(1, 0){\intab[0.58]{Compromised}}
            \put(74, 0){\intab[0.58]{Compromised}}

        \end{overpic}
    }\hfill
    \subcaptionbox{\label{subfig:d}}[0.32\textwidth]{
        \begin{overpic}[width=\linewidth]{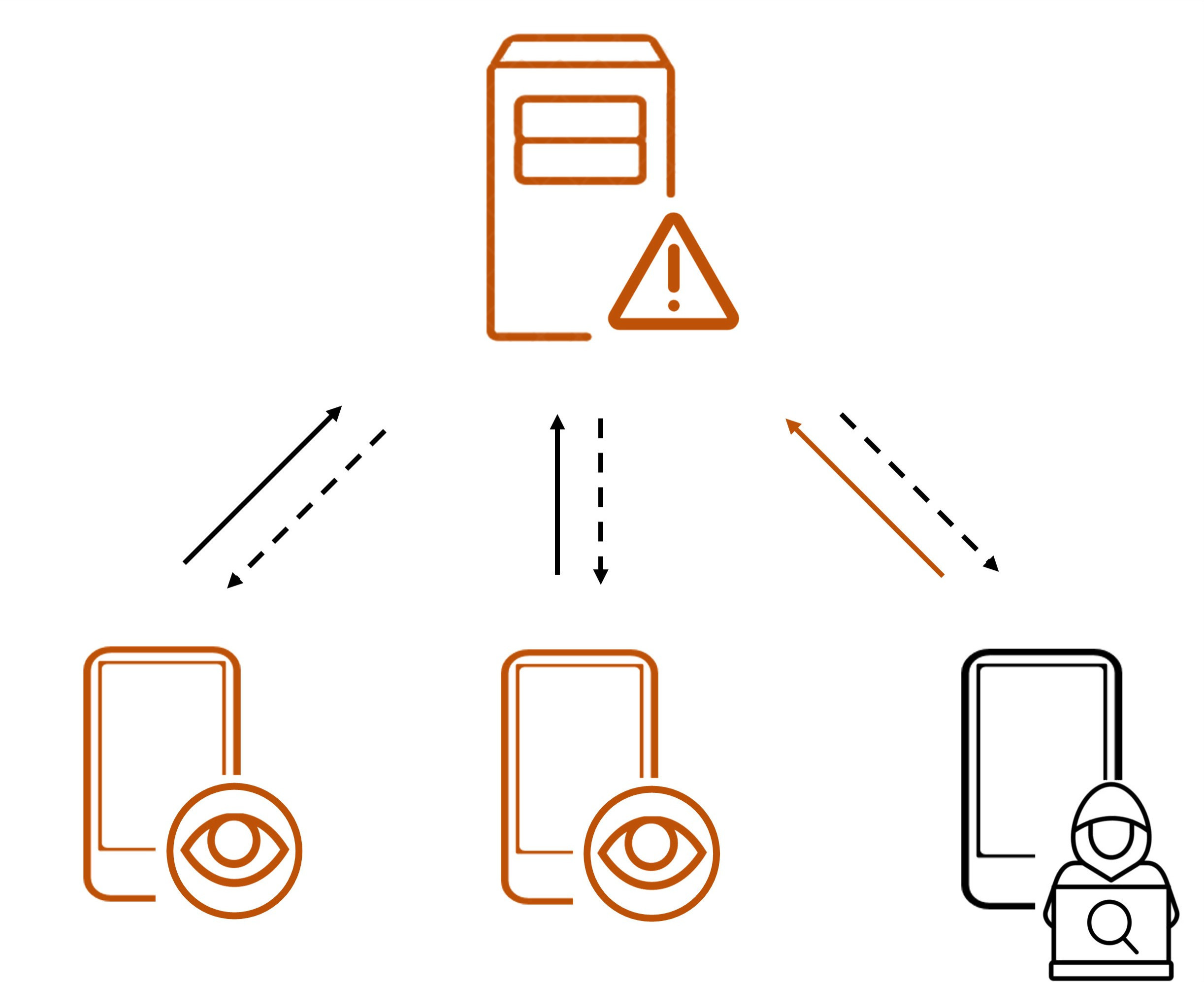}
            \put(5, -3){\intab[0.6]{Client $1$}}
            \put(40, -3){\intab[0.6]{Client $2$}}
            \put(78, -5){\intab[0.6]{Maliciously}}
            \put(76, -10){\intab[0.6]{Curious Client}}

            \put(60, 74){\intab[0.6]{Byzantine-robust}}
            \put(60, 69){\intab[0.6]{Aggregation}}

            \put(59, 61){\intab[0.6]{($\uparrow$ Risk)}}


            \put(20.5, 15){\intab[0.6]{DP}}
            \put(54.9, 15){\intab[0.6]{DP}}

            \put(2.5, 47){\intab[0.6]{Noisy}}
            \put(2.5, 42){\intab[0.6]{Gradient}}

            \put(60, 30){\intab[0.6]{Poisoning}}

            \put(76, 47){\intab[0.6]{Global}}
            \put(76, 42){\intab[0.6]{Model}}
        \end{overpic}
    }
    \caption{
    Illustration of FL systems under different threat and defense scenarios. 
    \textcolor{mygreen}{Green} icons indicate successful defenses. 
    (a)~Clients apply local DP to defend against an honest-but-curious server performing gradient inversion, rendering the attack ineffective.
    (b)~The server employs Byzantine-robust aggregation to neutralize a Byzantine attacker, resulting in ineffective attacks. 
    Broken shield icons represent clients that are presumed to be compromised in the Byzantine threat model. 
    \textcolor{myorange}{Orange} arrow represents the poisoned gradient sent by the attacker. 
    (c)~The proposed threat scenario, where a maliciously curious client sends poisoned updates while attempting to reconstruct private data from peers. 
    Applying standard defenses to enhance robustness and privacy may unintentionally increase the risk of privacy leakage. 
    \textcolor{myorange}{Orange} icons highlight scenarios where privacy risks are increased.
    }
    \label{fig:overview}
\end{figure*}

To mitigate these risks, researchers have developed targeted defenses depending on whether the attacker is the server or a client. 
In the case of defending against server-side data reconstruction attacks, client-side defenses have been developed to enhance privacy. 
For example, secure aggregation and homomorphic encryption have been proposed to protect gradients in transit~\cite{munjal2023systematic, rathee2023elsa}. 
Alternatively, differential privacy~(DP) mechanisms inject noise into clients' gradients to reduce the leakage of sensitive information~\cite{wei2021gradient, hu2024does}, offering a trade-off between privacy protection and model accuracy. 
In parallel, server-side defenses such as robust aggregation rules have been proposed to counter Byzantine attacks launched by clients. 
Defenses based on robust statistics~\cite{blanchard2017machine} and anomaly detection~\cite{shejwalkar2021manipulating} aim to filter out corrupted gradients and protect the global model from degradation. 
More advanced defenses include reference-based selection and adaptive filtering~\cite{cao2021fltrust, fang2024byzantine}. 
These robust aggregation rules are often designed and evaluated under specific and idealized Byzantine threat models. 
Theoretical convergence results have also been established for FL with different server-side defenses, although the results may not always translate into practical resilience~\cite{shejwalkar2022back, allouah2023privacy}.

Existing research in FL has predominantly treated gradient-based privacy leakage and Byzantine robustness as separate concerns. 
In this work, we uncover a blind spot at their intersection by introducing a novel client-side threat, namely, the maliciously curious client. 
Such a client-side attacker appears Byzantine from the server's perspective because of its capability of using poisoned datasets or supplying poisoned gradients. 
Meanwhile, this client-side attacker is mainly interested in reconstructing sensitive data from other clients, rather than purely disrupting the training process like Byzantine adversaries. 
This threat model challenges the existing separation between server- and client-side defense techniques. 
Figure~\ref{fig:overview} provides a high-level overview by contrasting standard FL setups with the proposed maliciously curious client scenario.  
We formally establish the threat model by conducting a theoretical analysis, demonstrating that such an attacker can achieve significant reconstruction success under standard FL protocols. 
In particular, server-side Byzantine-robust aggregation defenses are designed to enhance robustness but do not directly protect against data leakage. 
On the other hand, client-side defenses are not optimized to handle malicious gradients and may even end up exposing more vulnerabilities. 
Our extensive experiments reveal that these two types of existing defense may not protect against this threat. 
More critically, we find that combining standard server- and client-side defenses may not effectively mitigate this threat and, in some cases, can even amplify data leakage compared to the non-defense baseline. 
The contributions of the paper are summarized as follows.  

\begin{enumerate}[leftmargin=*]
    \item[\textbullet]
    We introduce a novel client-side threat model for FL, termed the \textit{maliciously curious client}, which manipulates its own gradients with the goal of reconstructing from peer clients.  
    We provide a theoretical analysis demonstrating the feasibility and effectiveness of this attack under standard FL protocols. 
    To the best of our knowledge, this is the first work to analyze the risk of such a threat in FL. 
    \item[\textbullet]
    Going beyond theoretical analysis and threat modeling, we further design a reconstruction algorithm that combines gradient inversion with poisoning strategies to facilitate data reconstruction. 
    \item[\textbullet]
    We evaluate server-side robust aggregation and client-side privacy defenses against this new threat of maliciously curious clients.  
    Our findings reveal a distinct interplay between utility and privacy, where efforts to improve robustness or privacy may unintentionally amplify privacy leakage. 
\end{enumerate}
The remainder of the paper is organized as follows.  
Section~\ref{sec:related_work} reviews related work on privacy and robustness in federated learning.  
Section~\ref{section:preliminaries} introduces the problem setup and notations.  
In Section~\ref{section:algorithm_analysis}, we provide a theoretical analysis of the proposed threat model and detail the corresponding data reconstruction algorithm. 
Section~\ref{sec:exp} describes the implementation and provides experimental results.  
Finally, Section~\ref{sec:conclusion} concludes with a summary and future directions.

\section{Related Work}\label{sec:related_work}

\subsection{Client Privacy Leakage in FL}\label{sec:client_leakage}

Although FL was initially proposed to improve privacy by avoiding raw data exchange~\cite{mcmahan2017communication, li2020federated}, numerous studies have shown that transmitting models or gradients may leak sensitive information~\cite{zhu2019deep, geiping2020inverting, tramer2022truth}. 
These privacy threats can be broadly divided into two categories, namely, inference attacks and data reconstruction attacks. 
Inference attacks aim to extract information about the training data without directly recovering the original inputs.  
For example, membership inference attempts to determine whether a particular example was included in the training dataset~\cite{shokri2017membership, bai2024membership}. 
A participant can passively eavesdrop on the transmitted gradients  and construct a prediction model or actively perform gradient ascent on the non-member example to conduct the attack~\cite{nasr2019comprehensive}. 
On the other hand, a property inference attack can be launched by constructing input datasets that exhibit specific target properties and passing them through the global model. 
The attacker may compare the resulting gradients with those from non-target datasets to infer whether the target property is present in the training data~\cite{melis2019exploiting}. 
These attacks can be conducted by clients or the server. 

Compared to inference attacks, data reconstruction attacks pose a more direct threat of privacy leakage~\cite{zhang2024survey}. 
These attacks can recover training data from clients at the pixel or token level. 
Depending on the type of shared information exploited, these attacks can be broadly classified into parameter-based model inversion attacks and gradient-based data reconstruction attacks. 
The former type indirectly infers training data based on the weights of the global model, typically using generative neural networks to synthesize representative inputs. 
For instance, the generative adversarial network~(GAN)-based method~\cite{hitaj2017deep, wu2024fedinverse} treats the global model as a discriminator to guide the generator in producing inputs resembling those of other clients. 
In contrast, gradient-based attacks directly leverage client-uploaded gradients, enabling more precise recovery of individual training samples through optimization-based inversion.

Our work falls under the second category, which focuses on gradient-based data reconstruction~\cite{zhu2019deep, yin2021see, du2024sok}. 
The attack is often assumed to be launched by the server. 
A well-known optimization-based approach aims to approximate ground-truth inputs through iterative refinement starting from random noise, with objective functions that measure the discrepancy between dummy and real gradients~\cite{zhu2019deep, geiping2020inverting}. 
Another branch of reconstruction attacks analyzes gradients, where the adversary can recursively derive the input from gradients~\cite{fan2020rethinking}. 
In addition, the server may also modify the structure or parameters of the model to increase the success rate of reconstruction~\cite{zhao2023resource, boenisch2023curious}. 
Generally speaking, a compromised server represents a strong and somewhat unrealistic threat, considering that servers are usually deployed in controlled environments, whereas adversarial clients are easier to introduce and harder to detect in decentralized settings~\cite{du2024sok}.
To the best of our knowledge, this is the first work to formally define and theoretically analyze the impact of maliciously curious clients in the context of gradient-based data reconstruction attacks.

%
\subsection{Byzantine Adversaries and Poisoning Attacks}

Robustness is central to trustworthy machine learning, especially in distributed settings where clients can behave inconsistently due to failures, adversarial intent, or unstable communication~\cite{blanchard2017machine, lyu2022privacy}. 
Beyond the privacy leakage in Section~\ref{sec:client_leakage}, another important threat model in FL involves malicious clients that actively interfere with the training process. 
In Byzantine attacks, client attackers may send modified updates to disrupt model convergence~\cite{han2023fedmlsecurity, li2023experimental}. 
For example, Xie et al.~\cite{xie2020fall} designed poisoned updates that cause the aggregated update to have a negative inner product with benign updates, thereby reversing the direction of learning. 
Meanwhile, clients may poison the data or inject specific patterns into the model to cause targeted misbehavior in inference~\cite{li20233dfed, wan2024data}. 
The poisoned model may have a high error rate given the input with backdoor triggers~\cite{shejwalkar2022back}.
Many Byzantine attack scenarios assume that the adversary has full access to training data or gradients of benign clients, essentially presuming a privacy breach from the outset~\cite{xie2020fall, shejwalkar2021manipulating, ozfatura2023byzantines}. 
Consequently, evaluation typically focuses on convergence rate or accuracy degradation, with little attention paid to direct privacy consequences. 

Recently, researchers have started to explore how malicious clients can amplify the membership inference attack with poisoning strategies. 
Tram{\`e}r et al.~\cite{tramer2022truth} showcase that the model can memorize the victim data as ``outliers'' by mislabeling these examples.  
Similarly, Zhang et al.~\cite{zhang2023agrevader} combine the gradient ascent in benign datasets and the gradient descent in mislabeled ones.  
The idea of applying gradient ascent to victim datasets has been widely adopted in membership inference attacks~\cite{nasr2019comprehensive, ma2023loden}. 
We note that these privacy attacks require exact knowledge of the victim dataset to ensure that the adversary can query the model on that dataset. 
This assumption of an available victim dataset aligns with the aforementioned Byzantine attacks, which is fundamentally different from the assumption of unavailable data from benign clients in the data reconstruction setup that this work focuses on.

\subsection{Defenses and Secure Computations}

Various defense mechanisms have been developed to address privacy leakage and robustness threats in FL. 
These techniques span privacy-preserving mechanisms that aim to limit the information exposed in model updates and robustness methods that protect against malicious or unreliable clients.
Differential privacy (DP) is one of the most widely used approaches to limit information leakage~\cite{zhang2022understanding}. 
By injecting calibrated noise into gradients or model parameters, DP limits the contribution of individual data points~\cite{abadi2016deep, wei2021user}. 
Various DP variants exist, including local DP~\cite{wang2020federated}, distributed DP~\cite{kairouz2021distributed}, and central DP~\cite{dubey2020differentially}, each assuming different levels of trust in clients and servers. 
Although DP provides formal privacy guarantees, it often leads to utility loss, especially when clients have small datasets that are not independent and identically distributed~(non-IID)~\cite{zhang2024survey}.
On the other hand, secure aggregation protocols aim to hide individual client updates by aggregating them in an encrypted form~\cite{bonawitz2016practical, kairouz2021distributed}. 
These techniques are effective against an honest-but-curious server, but may introduce computation and communication overhead and struggle with dynamic client participation.
Other cryptographic solutions include secure multiparty computation~\cite{liu2024survey} and homomorphic encryption~\cite{hijazi2023secure}. 
The former technique allows a group of clients to compute global models collaboratively through cryptography, while the latter enables computations on encrypted gradients.
In parallel, trusted execution environments~(TEEs)~\cite{kalapaaking2022blockchain} offer an alternative by executing sensitive computations in a protected enclave.

In terms of robust defenses, numerous aggregation methods have been developed to counter Byzantine clients. 
Classical approaches such as Krum~\cite{blanchard2017machine}, Trimmed Mean~\cite{yin2018byzantine}, and Median~\cite{yin2018byzantine} use robust statistics to suppress outlier gradients. 
More advanced methods reject client updates that do not meet some well-designed criteria~\cite{fang2024byzantine, krauss2023mesas}. 
Meanwhile, clustering-based approaches provide a self-supervised approach to isolate malicious clients~\cite{fereidooni2024freqfed, xu2022byzantine}. 
Notably, these defenses are primarily designed to maintain model robustness rather than to protect clients' data privacy.

Despite their individual strengths, existing defenses may be inadequate when faced with more sophisticated threat models. 
For example, differential privacy and secure aggregation are designed to protect against an untrusted server, but offer little resistance to Byzantine-style poisoning. 
In contrast, robust aggregation methods improve model robustness but do not inherently protect data privacy. 
It is unclear whether the system exposed under a new dedicated threat will be vulnerable with existing well-designed defenses, as in the case of the proposed maliciously curious client with data reconstruction purposes.
As our theoretical analysis and empirical findings will demonstrate, these defenses not only fall short in this setting, but can unintentionally amplify privacy leakage in some cases.

\section{Preliminaries}\label{section:preliminaries}
 
In this section, we review the relevant background to establish the context for our proposed threat model and analysis. 
We cover the standard FL training protocol, along with key privacy-preserving mechanisms and robustness techniques designed to mitigate Byzantine attacks. The symbol conventions used throughout the paper are as follows. We use $[N]$ to denote the integer set $\{1, 2, \dots, N\}$. 
Lowercase boldface letters 
represent column vectors, 
uppercase boldface letters represent matrices, and calligraphic letters denote sets.


\subsection{Federated Learning}

Consider an FL architecture optimized with FedAvg~\cite{mcmahan2017communication}, which is a backbone of commonly adopted FL protocols.  
Denote the $m$th client's dataset by $\mathcal{D}_m =  \{(\x_{m,i},y_{m,i})\}_{i=1}^{N_m}$, where the $i$th example $(\x_{m,i},y_{m,i})$ contains an input-output pair drawn from a distribution $\mathcal{P}_m$.
The local objective function $f_m(\cdot)$ is defined as:
\begin{equation}
    f_m(\w) = \frac{1}{N_m} \sum_{i=1}^{N_m} \ell(\w; \x_{m,i}, y_{m,i}),
\end{equation}
where $\ell$ is a sample-wise loss function quantifying the error of the model with a weight vector $\w \in \mathbb{R}^{d}$ predicting the label $y_{m,i} \in \mathbb{R}$ given an input $\x_{m,i} \in \mathbb{R}^{d_{\text{in}}}$. 
FL aims to optimize the following aggregated problem:
\begin{equation}
    \min_{\w \in \mathbb{R}^{d}} f(\w) = \frac{1}{M} \sum_{m=1}^{M} f_{m}(\w),
\end{equation}
where $M$ is the total number of clients.
In FedAvg, the server will select a subset $\set^{(k)}$ of clients and broadcast a global model $\w[k]$ in each communication round $k$.
Once the model is received by the $m$th client, it will initialize a local model $\w[k,0]_m$ and optimize it with multiple gradient descent steps. 
Consider the set of batch indices $\mathcal{I}^{(k,t)}_m \subseteq [ N_m ]$ with a batch size of $B$, the local model updated at step $t+1$ can be written as 
\begin{equation}
\w[k,t+1]_{m} = \w[k,t]_{m} \!-\! \frac{\eta}{B} \!\sum_{i \in \mathcal{I}^{(k,t)}_m } \! \update[k,t]_{m,i},
\end{equation}
where $\eta$ is the learning rate, and the gradient term $\update[k,t]_{m,i}$ is defined as follows,
\begin{equation}\label{eq:individal_grad}
  \boldsymbol{g}^{(k, t)}_{m,i} \triangleq \nabla \ell (\w[k,t]_m; \x_{m,i}, y_{m,i}). 
\end{equation}
Once each client finishes local update after $\tau$ iterations, the model is uploaded to the server for aggregation. 
Under the FedAvg protocol, the updated model may be written as 
\begin{equation}
    \w[k+1] = \frac{1}{|\mathcal{S}^{(k)}|} \sum_{m \in \mathcal{S}^{(k)}} \w[k, \tau]_{m},  
\end{equation}
where $\mathcal{S}^{(k)}$ represents the selected client set at communication round $k$.


\subsection{Privacy Leakage and Defense}
This subsection reviews the technical details of gradient inversion attack~\cite{zhu2019deep} and the corresponding DP defenses related to privacy leakage. 
Suppose the gradient $\boldsymbol{g}$ of a client is revealed to an honest-but-curious server during information exchange in FL. 
For simplicity, consider the case where the number of local update step $\tau = 1$, 
\begin{equation}
    \boldsymbol{g} \triangleq \frac{1}{B} \sum_{i =1}^{B} \! \nabla \ell (\w; \x_{i}, y_{i}).
\end{equation}
The honest-but-curious server initializes dummy data $\bx_{i} \sim \mathcal{N}(\mathbf{0}, \mathbf{I}_{d_{\text{in}}}), \breve{y}_i \sim \mathcal{N}(0, 1)$, $i \in [B]$, and obtains the dummy gradients $\breve{\boldsymbol{g}}$ using standard forward pass and backward propagation, 
and solves the following optimization problem to match with the shared gradient, 
\begin{equation}\label{eq:optim_problem}
    \{\hat{\x}_{i}, \hat{y}_i\}_{i=1}^{B} = \argmin_{\{\bx_{i}, \breve{y}_i\}_{i=1}^{B}} \| \breve{\boldsymbol{g}} - \boldsymbol{g} \|^{2}. 
\end{equation}
To address privacy vulnerability, various defense strategies have been explored. 
One of the most prominent among them is differential privacy. The details are given as follows.  

\highlight{Differential Privacy~(DP)~\cite{dwork2006differential}. } 
DP provides a formal framework to limit the information that a learning algorithm can disclose about any individual data point in its training set~\cite{abadi2016deep, mohammadi2021differential}. 
Intuitively, an algorithm that satisfies DP guarantees that its output distribution remains nearly unchanged regardless of whether any single individual's data is included in the training set or not. 
The following definition details this notion of indistinguishability.

\begin{Definition}
    A randomized mechanism $M: \mathcal{D} \rightarrow \mathcal{R}$ satisfies $(\varepsilon, \delta)$-differential privacy if, for any two adjacent inputs $D, D^{\prime} \in \mathcal{D}$ and for any subset of outputs $S \subseteq \mathcal{R}$ it holds that
    \begin{equation}
        \prob[M(D) \in S] \leqslant e^{\varepsilon} \prob \left[M\left(D^{\prime}\right) \in S\right]+\delta. 
    \end{equation}
\end{Definition}
A widely adopted approach to train DP models in FL is differentially private stochastic gradient descent (DP-SGD)~\cite{abadi2016deep}. 
In DP-SGD, privacy is achieved by adding calibrated Gaussian noise to the aggregated gradients during training. 
To ensure bounded sensitivity, the algorithm clips the $\ell_2$ norm of each individual example's gradient before averaging. 
Based on the notation in \eqref{eq:individal_grad}, the clipped gradient may be written as 
\begin{equation}\label{eq:clip_grad}
    \tupdate[k, t]_{m,i} \triangleq \frac{\boldsymbol{g}^{(k, t)}_{m,i}}{\max \{1, \| \boldsymbol{g}^{(k, t)}_{m,i} \|_2/C \}} ,
\end{equation}
where $C > 0$ denotes the predefined upper bound on the gradient norm.  
Among the DP variants reviewed in Section~\ref{sec:related_work}, local DP offers the strongest protection, as each client independently enforces DP on its updates before sending them to the server. 
Formally, the update step of the $m$th client with local DP may be represented as 
\begin{equation}
    \w[k, t+1]_{m} = \w[k, t]_{m} - \eta\, \Big(\tfrac{1}{B} \sum_{i \in I^{(t)}} \tupdate[k,t]_{m,i} + \noise \Big), 
\end{equation}
where the noise vector $\noise \sim \mathcal{N}(\mathbf{0},   (C/B)^{2} \sigma^2 \mathbf{I}_{d})$. 
According to the analysis in~\cite{abadi2016deep}, to ensure that the mechanism satisfies $(\varepsilon, \delta)$-DP, the noise strength is suggested to be chosen as
\begin{equation}
    \sigma \geqslant c \frac{q \sqrt{\tau \log (1 / \delta)}}{\varepsilon},
\end{equation}
where $q=B / N_m$ is the sampling rate, $\tau$ is the total number of local update steps, and $c$ is a nonnegative constant. 

\subsection{Robust Aggregation Against Byzantine Clients}\label{section:byzantine_defense}
Many robust aggregation method  identifies a benign client set $\mathcal{S}^{(k)}_{\Lambda}$ and performs FedAvg on the selected gradients, i.e., 
\begin{equation}
    \w[k+1] = \frac{1}{| \mathcal{S}^{(k)}_{\Lambda} |} \sum_{m \in \mathcal{S}^{(k)}_{\Lambda}} \w[k, \tau]_{m}. 
\end{equation}
We provide examples of state-of-the-art Byzantine-robust defenses as follows. 
\begin{enumerate}[label=(\roman*)]
    \item The \textit{frequency analysis-based federated learning (FreqFed)}~\cite{fereidooni2024freqfed} applies the discrete cosine transform to the received gradients and obtains transformed low-frequency coefficients, which are used to identify a benign client set $\mathcal{S}_{\Lambda_{\text{F}}}$ via the hierarchical density-based spatial clustering~(HDBSCAN) algorithm. 
    \item \textit{Byzantine-robust averaging through local similarity in decentralization (Balance)}~\cite{fang2024byzantine} uses a norm-based criterion to identify the benign set $\mathcal{S}_{\Lambda_{\text{B}}}$, excluding clients whose update norms deviate significantly from a reference calculated on the server. 
    \item \textit{Divide and Conquer (DnC)}~\cite{shejwalkar2021manipulating} defense calculates outlier scores based on the eigenanalysis of gradient matrices and constructs a desirable set $\mathcal{S}_{\Lambda_{\text{D}}}$. 
    \item \textit{SignGuard}~\cite{xu2022byzantine} filters out the outlier gradients and then selects client updates whose gradient signs align with the majority of others.
Clients that pass both filtering stages are included in  $\mathcal{S}_{\Lambda_{\text{S}}}$. 
    \item The well-adopted \textit{Krum} and \textit{Multi-Krum}~\cite{blanchard2017machine} choose client updates that are closest to their neighbors to form a selected set $\mathcal{S}_{\lambda_{\text{K}}}$. 
\end{enumerate}
More details of these defenses can be found in Appendix~\ref{app:robust_agg}.
In the next section, we will investigate the potential impact of a client-side data reconstruction attack.   
\section{Impact of Maliciously Curious Clients}
\label{section:algorithm_analysis}

In this section, we first define the proposed threat model. We then analyze its risk by presenting a theoretical upper bound on the reconstruction error, from which we discuss the impact of various factors and conjecture the potential attacks that a maliciously curious client can adopt.

\subsection{Threat Model}

We consider a single \textit{maliciously curious client} as the threat model. The objectives and assumptions for both the client attacker and the system defenders are outlined below. 

\highlight{Attacker's Goal. } 
The client attacker aims to reconstruct private training data belonging to other benign clients. 

\highlight{Attacker's Capability. } 
The client attacker participates in training as a legitimate client and does not collude with other participants.
It can manipulate its own gradient updates before uploading to the server and eavesdrop on changes in the global model to facilitate data reconstruction. 
The attacker adheres to the model architecture specified in the training protocol and does not modify it.  
To persist in training and continually receive broadcast models, the attacker may also employ a Sybil strategy, rejoining the training process under different identities if excluded. 

\highlight{Attacker's Knowledge. } 
The client attacker knows the architecture and parameters of the global model, the number of clients in the system, and the batch size used in the FL optimization.  
It has full access to its own local dataset but no access to the data or updates of other clients. 

\highlight{Defender's Goal. } 
In the context of this paper, In the context of this paper, the FL system defenders include a trustworthy central server and benign clients, excluding the single maliciously curious client. 
The server defender's goal is to preserve the integrity of the model, for example, to prevent training failure and backdoor injection. 
Benign clients aim to preserve the confidentiality of their private training data. 

\highlight{Defender's Capability. } 
All defenders operate under a standard FL protocol.
The server can apply Byzantine-robust aggregation rules and select client subsets for aggregation.
Benign clients may implement local differential privacy mechanisms before transmitting their updates. 
In the following, we highlight the differences and similarities between the proposed maliciously curious client and other existing threat models in the literature. 
Compared to Byzantine clients in Byzantine robust FL studies~\cite{shejwalkar2022back, li2023experimental, fang2024byzantine}, the proposed maliciously curious client has the same capability to modify its uploaded gradients or model. However, it is strictly weaker in terms of knowledge, as it does not assume access to the data or gradients of benign clients. Moreover, we consider only a single adversarial client, whereas Byzantine attacks often involve multiple colluding entities~\cite{xie2020fall, shejwalkar2021manipulating, li20233dfed}. This makes our threat model more practical in scenarios where only one adversary is present. 
Additionally, compared to the honest-but-curious participant in the literature~\cite{hitaj2017deep, wu2024fedinverse}, the proposed client attacker has the same goal of reconstructing benign clients' data. 
It can be more harmful in terms of poisoning capacity, with the risk of being detected and filtered out by a trustworthy server.

\subsection{Quantifying Attack Impact}\label{sec:quantify_impact}

We start with a gradient-based reconstruction function $\recon_{i}: \mathbb{R}^{d} \rightarrow \mathbb{R}^{d_{\text{in}}}$,
which takes a gradient vector or model update as input and estimates a data point $\hx_i$. 
The restoration of labels $\hat{y}_i$ can be analyzed in the same way and thus excluded in this paper. 
Within the context of client-side observation, the gradient may be represented as the difference between successive broadcast model weights, 
\begin{equation}
    \update[k] \triangleq \frac{1}{\eta}  \left[ \w[k] - \w[k+1] \right] .
\end{equation}
Suppose that the adversary aims to estimate $N$ data points $\X = [\x_1, \dots, \x_{N}]^{\top}$. 
The root mean square error~(RMSE) in expectation is evaluated as 
\begin{equation}
    \text{RMSE}(\hX^{(k)}) = \sqrt{ \frac{1}{d_{\text{in}} N} \mathbb{E} \| \Recon(\update) -  \X\|^2_{\text{F}} } .  
\end{equation}
where $\hX^{(k)} = [\hx^{(k)}_{1}, \dots, \hx^{(k)}_{N}]^\top$ represent the reconstructed training examples in the $k$th communication round, $\Recon(\cdot) \triangleq [\recon_{1}(\cdot), \dots, \recon_{N}(\cdot)]^\top$, and $\| \cdot \|_{\text{F}}$ is the Frobenius norm.  
We now introduce two standard assumptions widely used in federated optimization analysis~\cite{stich2019local, huang2021fl}.

\begin{Assumption}\label{assumption:L_continuous}
    The gradients of the objective functions $f_{m}$ are Lipschitz-continuous.  
    $\forall \; \w_1, \w_2 \in \mathbb{R}^d$, $m \in [M]$,  
    there exists some nonnegative $L_{g}$ such that:
    \begin{equation}
        \|\nabla f_{m}(\w_1) - \nabla f_{m}(\w_2) \|_2 \leqslant L_{g} \, \|\w_1 - \w_2 \|_2.
    \end{equation}
    We also assume that the reconstruction functions $\recon{i}$ are Lipschitz-continuous.  
    $\forall \; \boldsymbol{g}_1, \boldsymbol{g}_2 \in \mathbb{R}^d$, $i \in [N]$,  
    there exists some nonnegative $L_{\psi}$ such that:
    \begin{equation}
        \| \recon_{i}(\boldsymbol{g}_1) - \recon_{i}(\boldsymbol{g}_2) \|_2 \leqslant L_{\psi} \, \| \boldsymbol{g}_1 - \boldsymbol{g}_2 \|_2.
    \end{equation}
\end{Assumption}

\begin{Assumption}\label{assumption:bounded_data}
    Input data and reconstructed data have the same norm $\upsilon$, i.e., 
    $\| \x_{i} \|_2 = \| \recon_{i}( \update )  \|_2 = \upsilon$, $\forall\, i \in [N]$. 
\end{Assumption}

We now establish the upper bound for the reconstruction error under FedSGD, which is a special case of FedAvg with the number of local iterations $\tau = 1$. 

\begin{Theorem}\label{theorem:non_convex}
    Consider FedSGD with local DP against the client-side reconstruction attack. 
    Suppose that the conditions in Assumptions~\ref{assumption:L_continuous}--\ref{assumption:bounded_data} hold.  
    Let the learning rate $\eta = O\left(\frac{1}{L_{g}}\right)$, 
    then at the $k$th communication round, the expected reconstruction error satisfies the following inequality, 
\begin{equation}
    \textrm{\normalfont RMSE}(\hX^{(k)}) \! \leqslant \!\min\!\left\{  O\!\left(\rho_0^{(k)}\!\Delta^{(k+1)}\! + \! \rho_1^{(k)}\!\sigma \!+\! e^{(0)}\right) \!,\!  \frac{2\upsilon}{\sqrt{d_{\text{in}}}} \right\},
\end{equation}
where $\Delta^{(k+1)} = [ f(\w[0]) - f(\w[k+1])]^{1/2}$ is the objective gap, 
$\rho_0^{(k)} = 2 L_{\psi} \sqrt{2 L_g k}$ and 
$\rho_1^{(k)} = \frac{2 \sqrt{2d} L_{\psi}  C k }{\sqrt{M} B}$ are the time-variant terms,
and $e^{(0)} = \max_{i} \| \x_i - \recon_{i}(\update[0])  \| $ is the base error term. 
\end{Theorem}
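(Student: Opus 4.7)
The plan is to split the bound into two regimes and take the minimum. The easy half follows directly from Assumption~\ref{assumption:bounded_data}: by the triangle inequality, $\|\x_i - \recon_i(\update[k])\| \leqslant \|\x_i\| + \|\recon_i(\update[k])\| = 2\upsilon$. Substituting this uniform per-sample bound into the definition of RMSE yields $\textrm{RMSE}(\hX^{(k)}) \leqslant 2\upsilon/\sqrt{d_{\text{in}}}$, which is the second argument of the $\min$ and corresponds to the early regime in which the attack has essentially no leverage.

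For the non-trivial branch, I would anchor the per-sample error at round $0$ via the triangle inequality,
\begin{equation*}
\|\x_i - \recon_i(\update[k])\| \leqslant \|\recon_i(\update[k]) - \recon_i(\update[0])\| + \|\x_i - \recon_i(\update[0])\|,
\end{equation*}
where the second term is uniformly at most $e^{(0)}$ by definition, and the first term is at most $L_\psi \|\update[k] - \update[0]\|$ by Assumption~\ref{assumption:L_continuous}. Taking the max over $i$, then the expectation, and using Jensen to push $\mathbb{E}$ inside the square root in the definition of RMSE reduces the problem to controlling $\mathbb{E}\|\update[k] - \update[0]\|$.

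To bound that quantity, I would telescope $\update[k] - \update[0] = \sum_{j=0}^{k-1}(\update[j+1] - \update[j])$ and split each increment into a smooth deterministic piece and an independent DP noise piece. Writing $\update[j] = \bupdate[j] + \bar{\noise}^{(j)}$ with $\bupdate[j]$ the aggregated clipped gradient and $\bar{\noise}^{(j)} \sim \mathcal{N}(\mathbf{0}, C^2\sigma^2/(MB^2)\, \mathbf{I}_d)$ independent across rounds, smoothness gives $\|\bupdate[j+1] - \bupdate[j]\| \leqslant L_g \|\w[j+1] - \w[j]\| = \eta L_g \|\update[j]\|$, while independence and Jensen's inequality yield $\mathbb{E}\|\bar{\noise}^{(j+1)} - \bar{\noise}^{(j)}\| \leqslant \sqrt{2d}\, C\sigma/(B\sqrt{M})$. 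Summing over the $k$ increments produces the $k\sqrt{2d}\, C\sigma/(B\sqrt{M})$ noise contribution, which after multiplication by $L_\psi$ matches $\rho_1^{(k)}\sigma$ up to absorbed $O$ constants.

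The deterministic sum $\sum_{j=0}^{k-1}\|\update[j]\|$ is handled by the standard descent lemma for $L_g$-smooth non-convex objectives at stepsize $\eta = O(1/L_g)$, which yields $\sum_{j=0}^{k-1} \|\update[j]\|^2 \leqslant 2L_g[f(\w[0]) - f(\w[k+1])]$; Cauchy-Schwarz then gives $\sum_{j=0}^{k-1}\|\update[j]\| \leqslant \sqrt{2L_g k}\, \Delta^{(k+1)}$, and multiplying by $\eta L_g$ and then by $L_\psi$ produces the $\rho_0^{(k)}\Delta^{(k+1)}$ term. The main obstacle will be keeping the decomposition consistent once the stochastic minibatch and the gradient clipping in \eqref{eq:clip_grad} are folded into $\bupdate[j]$: the clipping introduces a bias that must either be absorbed into the smoothness constant or handled by bounding the clipped gradient norm directly so that the descent inequality still delivers a function-value gap of the form $\Delta^{(k+1)}$. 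Combining the three pieces, absorbing the $1/\sqrt{d_{\text{in}}}$ factor into $O(\cdot)$, and taking the minimum with the trivial bound completes the proof.
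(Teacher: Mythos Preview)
Your high-level scaffolding---trivial bound from Assumption~\ref{assumption:bounded_data}, anchoring at round $0$, telescoping $\update[k]-\update[0]$---matches the paper. The gap is in how you control the increments $\update[j+1]-\update[j]$. You split $\update[j]=\bupdate[j]+\bar{\noise}^{(j)}$ and invoke smoothness to claim $\|\bupdate[j+1]-\bupdate[j]\|\leqslant L_g\|\w[j+1]-\w[j]\|$. This step does not go through: $\bupdate[j]$ is an average of \emph{clipped} per-sample gradients on a \emph{random minibatch}, whereas Assumption~\ref{assumption:L_continuous} gives Lipschitzness only for $\nabla f_m$. Even without clipping, rounds $j$ and $j{+}1$ draw different batches $\mathcal{I}_m^{(j,0)}$ and $\mathcal{I}_m^{(j+1,0)}$, so the two averages are not evaluations of one smooth map at two nearby weights---their difference can be order one regardless of how close $\w[j]$ and $\w[j+1]$ are. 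Clipping only makes this worse, since the map $\w\mapsto \boldsymbol{g}/\max\{1,\|\boldsymbol{g}\|/C\}$ is not $L_g$-Lipschitz in $\w$. The ``absorb into the constant'' or ``bound the clipped norm'' fixes you mention do not repair this; neither yields a bound of the form $L_g\|\w[j+1]-\w[j]\|$.

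The paper sidesteps the issue entirely. It never splits $\update[j]$ into smooth plus noise; instead it writes $\update[j+1]-\update[j]=\tfrac{1}{\eta}\big[(\w[j+1]-\w[j+2])-(\w[j]-\w[j+1])\big]$ and bounds each $\mathbb{E}\|\w[j+1]-\w[j]\|^2$ by a descent argument on $f$. The key ingredient you are missing is the lower bound on the cross term
\[
\mathbb{E}\big\langle \nabla f(\w[j]),\,\update[j]\big\rangle \;\geqslant\; \frac{1}{M^2B^2}\sum_{m}\sum_{i\in\mathcal{I}}\mathbb{E}\|\tupdate[j,0]_{m,i}\|^2,
\]
which uses that clipping preserves direction ($\cos\theta=1$) and only shrinks norm. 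Plugged into the $L_g$-smooth descent inequality with $\eta=1/L_g$, this yields $\mathbb{E}\|\w[j+1]-\w[j]\|^2\leqslant 2\eta\big[f(\w[j])-f(\w[j+1])\big]+\tfrac{2\eta^2 d}{MB^2}C^2\sigma^2$, after which the telescoped sum collapses to $f(\w[0])-f(\w[k+1])$ plus a noise term, and subadditivity of $\sqrt{\cdot}$ produces $\rho_0^{(k)}\Delta^{(k+1)}+\rho_1^{(k)}\sigma$. In short, the route to the $\Delta^{(k+1)}$ term runs through the weight increments and the clipping-aware inner-product bound, not through Lipschitz continuity of clipped minibatch gradients.
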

\begin{proof}
    See Appendix~\ref{app:missing_proof}. 
\end{proof}\\
We note a few observations and present the corresponding insights as follows. 

\begin{Remark}\label{remark:training_progress}
    (Impact of training progress.) 
    The reconstruction of training data will be more accurate for FL models that are less trained.
    This is reflected mainly in the first term $\rho_0^{(k)} \Delta^{(k+1)}$ in a multiplicative way. First, the larger the objective gap $\Delta^{(k+1)}$ is (i.e., the more the FL model has converged), the larger the data reconstruction error will be.  
    This theoretical result is consistent with empirical observations in server-side reconstruction attacks that a better converged model may lead to reduced privacy leakage due to gradient inversion~\cite{wang2024more, du2024sok}.
    Second, the reconstruction error increases with the time/communication round
    index $k$, reflected by time-variant coefficients $\rho^{(k)}_0$.
    
\end{Remark}

\begin{Remark}\label{remark:impact_noise}
    (Impact of additive noise level.) 
    The impact of the noise level $\sigma$ is twofold. 
    On the one hand, a higher noise level increases the reconstruction error, which is expected by the second term $\rho^{(k)}_{1} \sigma$. 
    This is consistent with the well-established DP result, which states that a higher noise level leads to a tighter privacy budget~\cite{abadi2016deep, mohammadi2021differential}.   
    On the other hand, a higher noise level will negatively affect the progress of the training, leading to a decrease in the objective gap~$\Delta^{(k+1)}$. 
    The negative impact of DP on FL training has also been empirically observed~\cite{boenisch2023reconstructing}.  
    Depending on the choice of the noise level, DP may not always be preferable in terms of reducing the reconstruction error compared to the non-defense baseline. 
\end{Remark}

\begin{Remark}\label{remark:initial_cond}
    (Impact of initial conditions.) 
    The base error term $e^{(0)}$ reflects the reconstruction error at the early stage of training. 
    It is influenced by several factors, including the initialization of the model, the characteristics of the dataset, and the choice of the reconstruction algorithm. 
    Previous experimental studies on server-side reconstruction attack  have also found that these key factors can significantly affect the quality of reconstruction~\cite{wang2024more, du2024sok}. 
\end{Remark}

\begin{Remark}\label{remark:num_clients}
    (Impact of the number of clients.) 
    When there is no noise injection, the training progress generally improves with more FL participants both in theory~\cite{stich2019local} and in practice~\cite{mcmahan2017communication}.
    This improvement may be associated with a larger objective gap, $\Delta^{(k+1)}$, which leads to a higher reconstruction error.
    Furthermore, the base error term $e^{(0)}$ reflects the maximum reconstruction error across all training examples.
    As the number of clients increases, the adversary encounters a larger and more diverse dataset, potentially increasing the worst-case reconstruction error. 
    With the two effects added, we expect the reconstruction error to increase with the number of clients. 
\end{Remark}

\begin{Remark}
    (Extension to FedAvg.) 
    DP clipping effect may depend on the distribution of the gradients and will further introduce clipping bias~\cite{chen2020understanding}.
    Although Theorem~\ref{theorem:non_convex} is based on FedSGD, it can be extended to FedAvg with local DP by incorporating the interaction between client heterogeneity and clipping bias~\cite{zhang2022understanding}. 
    The empirical studies of FedAvg are further discussed in Section~\ref{sec:exp}.
    A comprehensive analysis is left for future study. 
\end{Remark}
In the next subsection, we discuss how such an attacker can leverage these findings to its advantage. 

\subsection{Roles of Maliciously Curious Client }
Based on the theoretical observations mentioned in Section~\ref{sec:quantify_impact}, we conjecture how a maliciously curious client can design its attack to facilitate reconstruction.  
\begin{enumerate}[itemindent=1.5em]
    \item[(C-1)] The client attacker can use poisoning to negatively affect the progress of training, effectively reducing the objective gap $\Delta^{(k+1)}$. 
    According to Remark~\ref{remark:training_progress}, fewer participants lead to better reconstruction. 
    \item[(C-2)] Consider benign clients adopting DP-based defense.   
    According to Remark~\ref{remark:impact_noise}, the potential effects could occur on multiple fronts. 
    Higher DP noise may increase training error and impede convergence, but excessive noise also provides strong privacy protection. 
    Moreover, the interaction between DP noise and poisoning can further impede convergence, making reconstruction easier. 
    Therefore, applying DP does not always guarantee improved privacy under this threat model. 
    \item[(C-3)] 
    Consider that the server chooses a Byzantine-robust aggregation method described in Section~\ref{section:byzantine_defense}. 
    If the defense is ineffective, the scenario falls back to C-1.  
    If the server defends well against poisoning attacks, it will lead to a decreased number of participants. 
    According to Remark~\ref{remark:num_clients}, the reconstruction error may be smaller compared to the baseline without using defense. 
    Moreover, Byzantine-robust defenses can slow down the training~\cite{blanchard2017machine, allouah2023privacy}, which may also result in better reconstruction based on Remark~\ref{remark:training_progress}. 
\end{enumerate}
We note that cryptographic defenses such as homomorphic encryption and secure multiparty aggregation do not directly protect against a client-side attacker. Since the maliciously curious client observes the broadcast global model, these cryptographic schemes do not alter the threat landscape and are thus not considered as distinct defenses in the discussion. 

In the following, we present the realization of the client-side data reconstruction for input data $\x_i$'s. 
The attack procedure combines gradient inversion and model poisoning. 
In particular, the client attacker first participates in two consecutive rounds of training and obtains the global models $\w^{(k)}$ and $\w^{(k+1)}$ broadcast by the server to calculate the aggregated gradient $\update[k] = (\w[k] - \w[k+1]) / \eta$.
To begin reconstruction, the adversary uses a pretrained autoencoder~\cite{baldi2012autoencoders}, denoted by a pair of encoders and decoders $(\enc, \dec)$, and randomly initializes the code $\{ \z^{(k)}_{m,i} \,|\, m \in [M], i \in [B] \}$.
The autoencoder generates a dummy data matrix $\bx^{(k)}_{m,i} = \enc(\z^{(k)}_{m,i})$ and calculates the dummy gradient $\breve{\boldsymbol{g}}^{(k)}_{m,i}$ with forward pass and backpropogation. 
Meanwhile, the client attacker may simulate the server-side aggregation process by iterating on a set of hypothesized surrogate functions $\{\Phi_{q}(\cdot)\}_{q=0}^Q$.
A simple example is $\Phi_{0}(\{ g_m \}_{i=j}^M) = \frac{1}{M} \sum_{i=1}^M g_i$, which simulates FedAvg aggregation.  
For consistency, the function $\Phi_{q}$ operates on the scaler inputs and can  also accept vector series by performing coordinate-wise operations. 
We provide a few more examples as follows. 
\begin{Example}
   \normalfont(Soft Median~\cite{geisler2021robustness}) 
    To simulate the effect of median aggregation~\cite{yin2018byzantine}, the adversary may adopt a soft median function.  
    Specifically, given an input series $\{g_{j}\}_{j=1}^{M}$, 
    The soft median may be designed as a weighted average, where each weight depends on how far a value is from an estimate of the median $\tilde{g}$,
    \begin{subequations}
    \begin{align}
        & \quad b_m  = \frac{\exp \left(-| g_m - \tilde{g} | / T\right)}{\sum_{j} \exp \left(-| g_j - \tilde{g} | / T\right)}, \\
        & \Phi_{1}\left(\left\{g_m\right\}_{m=1}^{M}\right)  = \sum_{m=1}^{M} b_{m} g_{m}. 
    \end{align}
    \end{subequations}
    Here, $T>0$ is the temperature parameter that controls the sensitivity of the weights to deviations from the median.  
    The initial estimate $\tilde{g}$ may be obtained by calculating the mean value of the series. 
    The final aggregation function is performed coordinate-wise on the gradients. 
\end{Example}

\begin{Example}
    \normalfont(Pseudo-Krum) 
Suppose the total number of attackers is $A$, a Krum defender chooses one client update that is closest to its neighbors~\cite{blanchard2017machine}. 
Formally, define the Krum score for each client $m$ as
\begin{equation}
s_{m} = \sum_{n \in \mathcal{N}_{m}} \left\| \boldsymbol{g}_{n} - \boldsymbol{g}_{m} \right\|^2,
\end{equation}
where $\mathcal{N}_{m}$ is the set of $M - A - 2$ clients closest to client $m$ in terms of Euclidean distance.
Then, the Krum aggregation rule selects the update with the lowest score:
\begin{equation}\label{eq:krum_}
\widehat{\Phi}_{2}(\{\boldsymbol{g}_{m}\}_{m=1}^{M}; A) = \boldsymbol{g}_{i^\star}, \quad i^\star = \argmin_{i} s_i.
\end{equation}
On the attack side, it does not solve \eqref{eq:krum_} directly. 
Instead, it randomly selects one vector from the input series, 
\begin{equation}
    \Phi_{2}(\{\boldsymbol{g}_{m}\}_{m=1}^{M}) = \boldsymbol{g}_i, \quad i \sim \mathcal{U}(1, \dots, M), 
\end{equation}
where $\mathcal{U}(\cdot)$ denotes the discrete uniform distribution.  
The surrogate function $\Phi_{2}(\cdot)$ is called a pseudo-Krum function. 
\end{Example}

We note that the maliciously curious client does not know the exact defense strategy employed by the server. 
Therefore, the attacker optimizes over the set of surrogate functions $\{\Phi_{q}(\cdot)\}_{q=0}^Q$. 
Accordingly, the objective function of the maliciously curious client may be written as
\begin{equation}
    \min_{ \substack{ q \in \{0, 1, \dots, Q\} \\ \{\z^{(k)}_{m,i} \,|\, m \in [M],\, i \in [B]\} }} \| \Phi_{q}(\{\breve{\boldsymbol{g}}^{(k)}_{m}\}_{m=1}^M) - \update[k]  \|^{2}. 
\end{equation}
After conducting the reconstruction attack, the client attacker will proceed with its own optimization. 
The client attacker modifies its message before submission using a poisoning function $p(\cdot)$. 
Below, we also provide a few examples of the poisoning attack without requiring data from the adversary's peers.

\begin{Example}\label{example:sf} 
    \normalfont(Sign Flipping~\cite{li2019rsa}) 
    Given an input gradient~$\update$, the client attacker flips the signs coordinate-wise.
    In addition, the poisoning strength can be controlled by a scaling factor $\kappa > 0$.
    Precisely, the poisoning function is designed as follows,
    \begin{equation}
        p_{1}(\update) = - \kappa \update. 
    \end{equation}
    In the FL optimization, the client attacker provides the negative gradient that may mislead the server to increase the loss. 
    This behavior is expected in Conjecture~C-1. 
\end{Example}

\begin{Example} 
    \normalfont(Gaussian Attack~\cite{blanchard2017machine}) 
    The client attacker will choose to submit a Gaussian random vector in this case. 
    The poisoning function may be written as
    \begin{equation}
        p_{2}(\update) = \boldsymbol{\zeta}, \quad \boldsymbol{\zeta} \sim \mathcal{N}(\mathbf{0},   \sigma^2_{\zeta}\mathbf{I}),
    \end{equation}
    where $\sigma_{\zeta}$ is the noise level. 
    Within the context of FL optimization, the client attacker provides the random gradient direction to the server to increase the loss. 
\end{Example}

\begin{Example} \label{example:backdoor} 
    \normalfont(Backdoor Attack~\cite{li20233dfed}) 
    The poisoning may also happen at the data level. 
    Given a reference dataset $\mathcal{D}_{a} = \{ (\x_{i}, y_{i}) \}_{i=1}^{B}$, 
    the poisoning processes each data pair as follows, 
    \begin{equation}
        p_{3}(\x_i, y_{i}) = ( \x_i + \mathbf{p}_i, y'_i), \; \forall\, i \in [B], 
    \end{equation}
    where $\mathbf{p}_i \in \mathbb{R}^{d_{\text{in}}}$ is an artificial pixel pattern, $y'_i$ is a label assigned by the adversary. 
    The backdoor attack can be designed in different ways with well-designed patterns and can be combined with a gradient-based attack~\cite{zhang2022neurotoxin}.    
\end{Example}

\section{Evaluation of Maliciously Curious Attacks}\label{sec:exp}

To validate the theoretical analysis in Section~\ref{section:algorithm_analysis}, we experimentally evaluate the risk of the proposed maliciously curious threat model and the associated data reconstruction attacks through various factor analyses. 
The task being evaluated is the federated training of convolutional neural networks for image classification, using the Dirichlet non-IID Fashion-MNIST~(F-MNIST) and MNIST datasets~\cite{hsu2019measuring}. 
We consider $M=10$ clients, optimizing the models with $\tau=5$ local iterations and a batch size of $B=100$. 
The detailed experimental setups can be found in Appendix~\ref{app:setup}.


\subsection{Impact of Training Progress}\label{sec:impact_progress}

\afterpage{%
\begin{figure}[!tb]
    \begin{overpic}[width=\linewidth, height=0.5\linewidth]{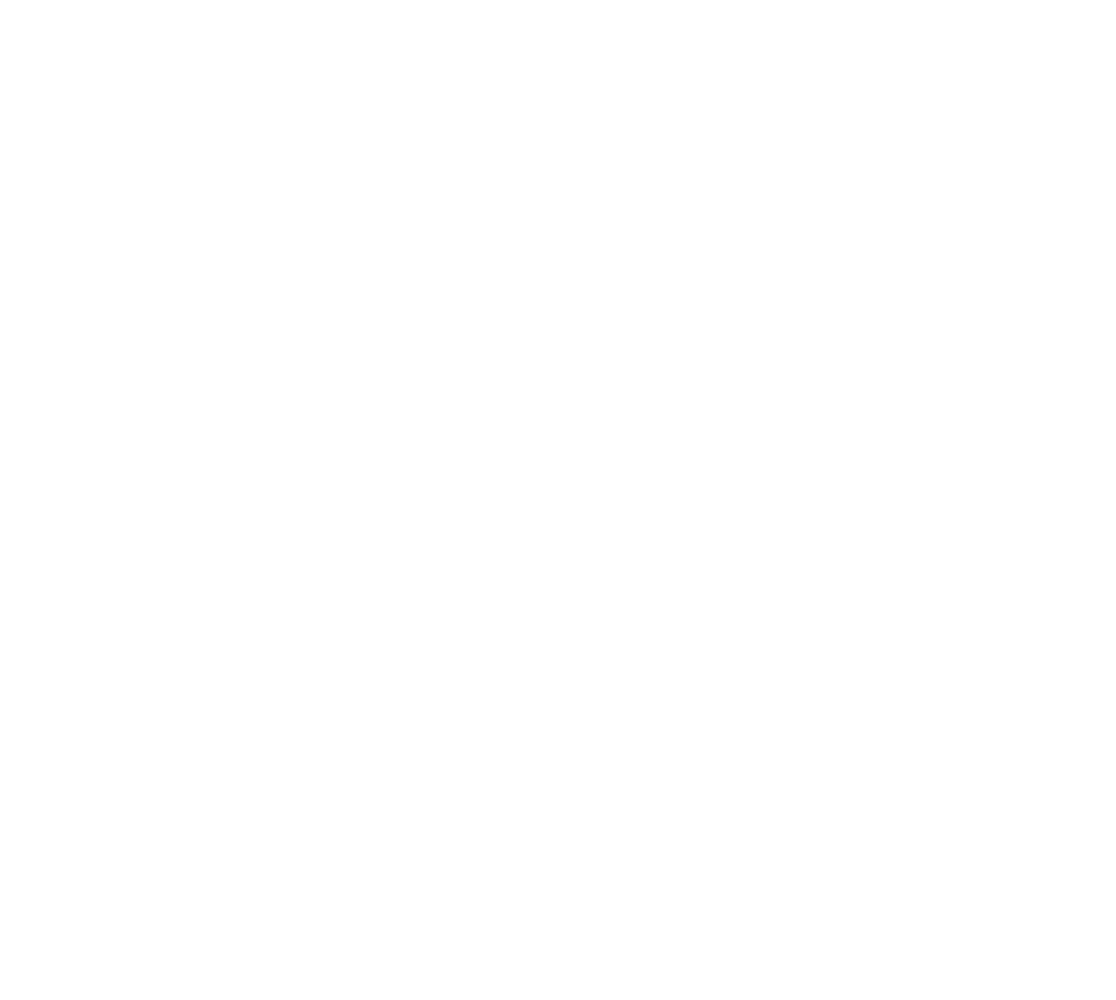}

    \put(23, 48){\intab{\bf MNIST}}
    \put(70, 48){\intab{\bf F-MNIST}}
    
    \put(49.4, 5){\includegraphics[width=0.51\linewidth]{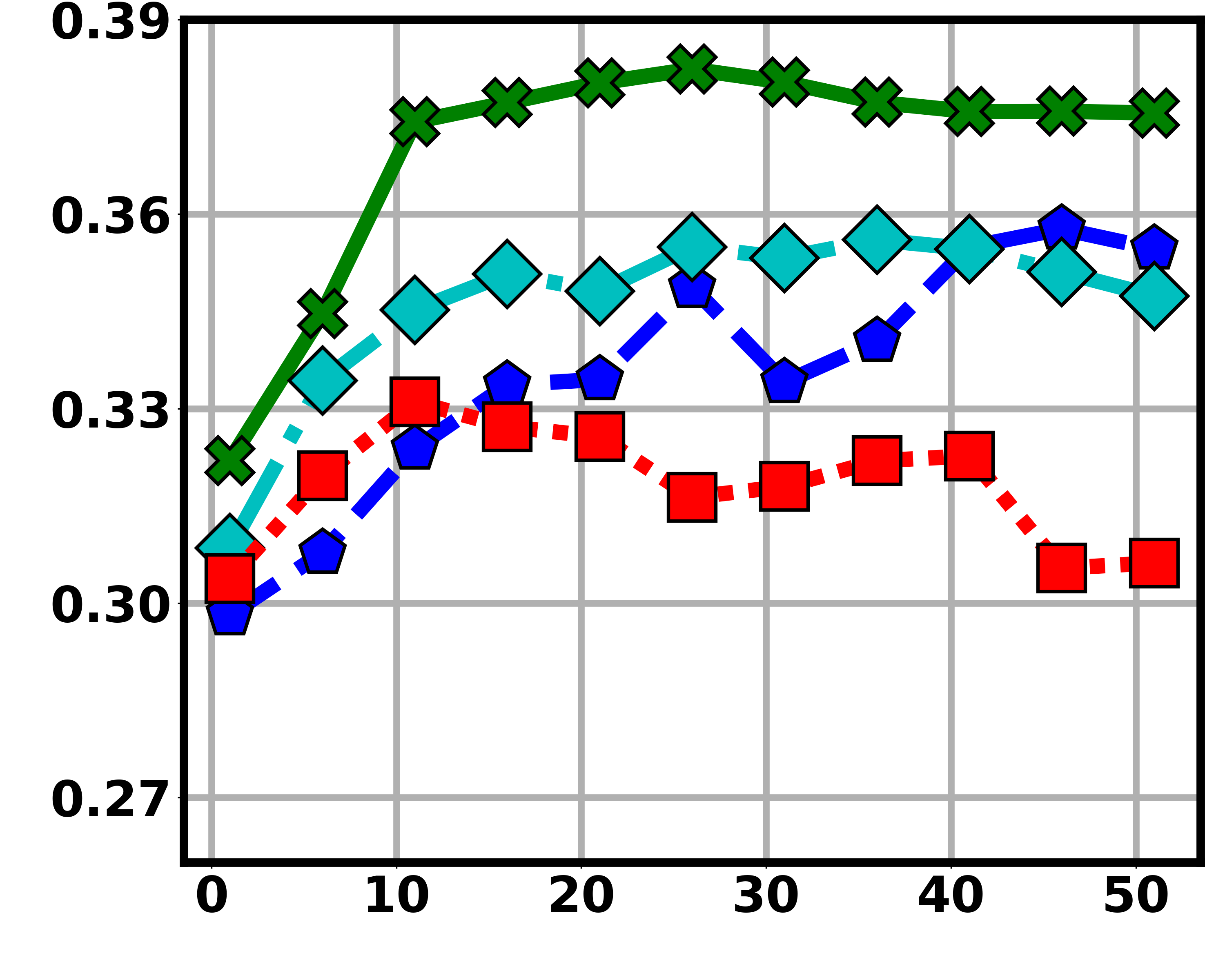}}
    \put(0.5, 5){\includegraphics[width=0.51\linewidth]{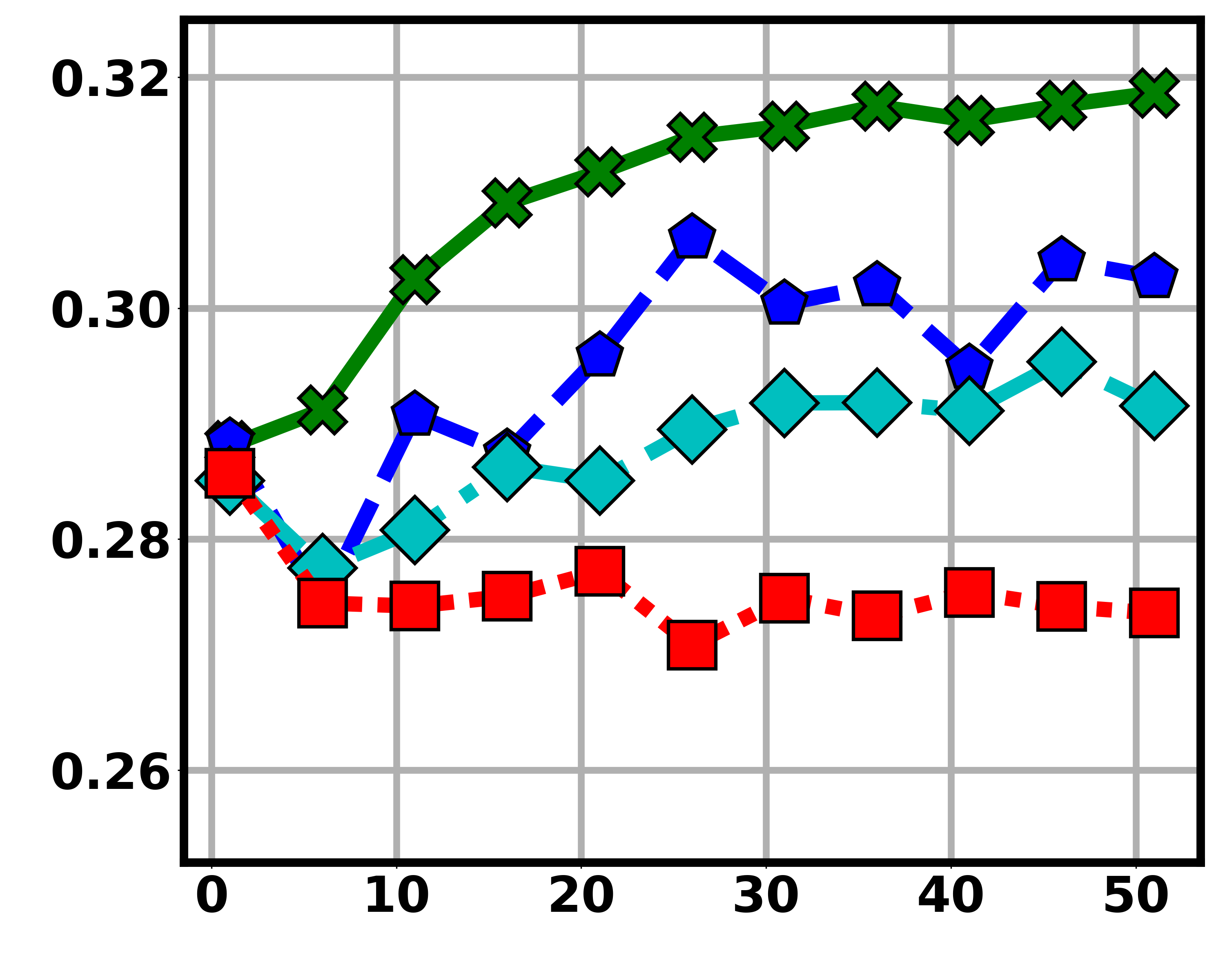}}

    \put(27, -5){\includegraphics[width=0.56\linewidth]{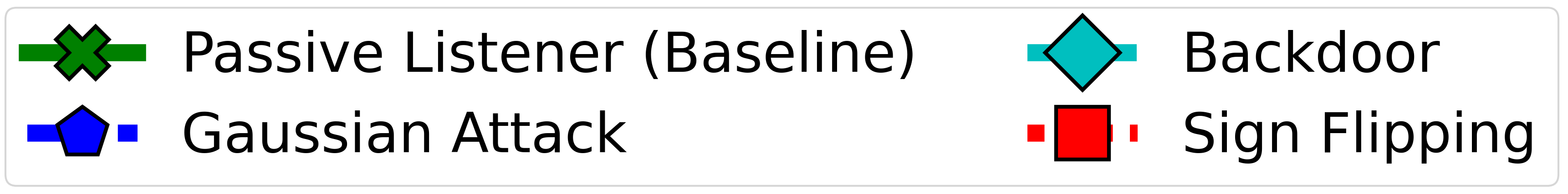}}

    \put(16, 5){\intab{\# of comm. rounds}}
    \put(65, 5){\intab{\# of comm. rounds}}

    \put(0, 21){\rotatebox{90}{\intab{ RMSE}}}

    \end{overpic}
    \vspace*{1pt}

    \caption{
        Reconstruction error versus the number of communication rounds under different poisoning strategies. 
        Poisoning-based attacks reduce the RMSE achieved by the Passive Listener baseline through slowing down the convergence of the FL model training, with Sign Flipping being the most effective. 
        This observation is consistent with Conjecture C-1, confirming that poisoning can amplify reconstruction capability.
    \label{fig:claim1}}
\end{figure}

\begin{table}[!tbp]
    \centering 
    \caption{Test accuracy ($\%$) under different poisoning functions. }\label{tab:acc_c1}
    \begin{tabular}{p{0.1\linewidth}p{0.16\linewidth}p{0.16\linewidth}p{0.16\linewidth}p{0.16\linewidth}}
        \toprule 
        Task      & \intab[0.9]{Passive Listener} & \intab[0.9]{Backdoor} & \intab[0.9]{Gaussian} & \intab[0.9]{Sign Flipping} \\ \midrule 
         \intab[0.9]{MNIST} & $  81.4 $ & $57.3$ & $68.5$ & $48.1 $ \\  \cmidrule{1-5} 
         \intab[0.9]{F-MNIST} & $ 72.5 $ & $59.4$ & $56.6$ & $46.7 $   \\ \bottomrule
    \end{tabular}
\end{table}
}

To validate Conjecture~C-1 and support Remark~\ref{remark:training_progress}, we evaluate the effect of different poisoning attacks on reconstruction performance without defense by using FedAvg. 
Specifically, we consider Sign Flipping, Backdoor attack, and Gaussian attack stated in Examples~\ref{example:sf}--\ref{example:backdoor}. 
We contrast with a \textit{Passive Listener} baseline, where the poisoning function $p(\cdot)$ reduces to the identity mapping.
Figure~\ref{fig:claim1} shows the reconstruction error of the reconstructed images as a function of the total number of communication rounds.
Across both tasks, the Passive Listener baseline exhibits a generally increasing trend in reconstruction error, consistent with Remark~\ref{remark:training_progress} that better model convergence may lead to greater reconstruction difficulty.
In contrast, all three poisoning-based attackers effectively hinder training progress and lead to lower reconstruction errors compared to the baseline. 
Notably, the Sign Flipping attack, which is known to be more disruptive to model convergence compared to the other two poisoning attacks, tends to achieve the lowest reconstruction error. 
This aligns with our Conjecture~C-1 that maliciously curious clients can exploit poisoning to slow model convergence and amplify privacy leakage.

TABLE~\ref{tab:acc_c1} reports the test accuracy at the end of the training. 
The results show that different poisoning strategies degrade model performance to varying degrees, with Sign Flipping yielding the lowest test accuracy.
It is worth noting that test accuracy does not directly correspond to reconstruction error.  
For example, while F-MNIST has lower test accuracy overall, its reconstruction error is typically higher than MNIST due to increased dataset complexity.


\subsection{Impact of Client-Side DP Noise Level }

\afterpage{%
\begin{figure}[!tb]
    \begin{overpic}[width=\linewidth, height=0.95\linewidth]{fig/4x4.pdf}
    
    \put(30.5, 0){\includegraphics[width=0.42\linewidth]{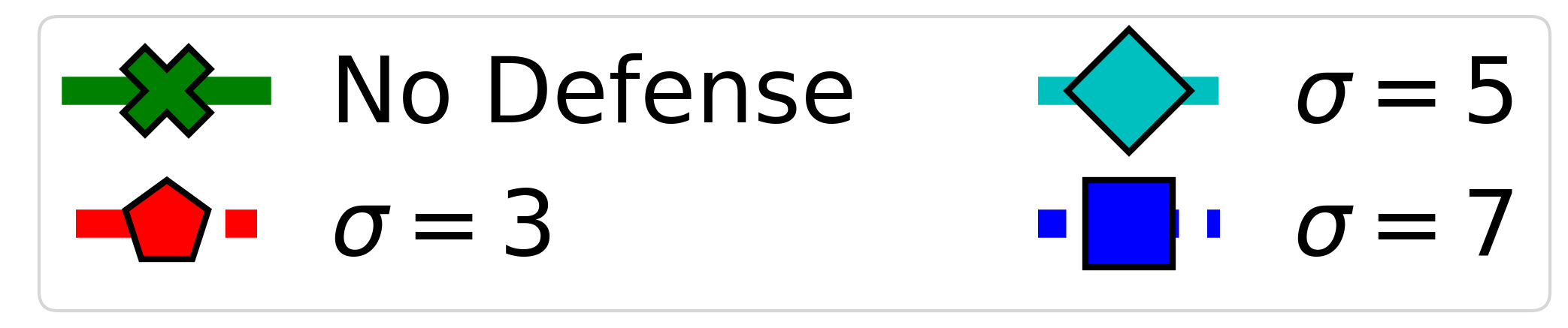}}
    
    \put(0, 62.5){\rotatebox{90}{\intab{\bf Passive Listener}}}
    \put(0, 24){\rotatebox{90}{\intab{\bf Backdoor}}}

    \put(24.5, 93){\intab{\bf MNIST }}
    \put(72, 93){\intab{\bf F-MNIST }}

    \put(52, 52.){\includegraphics[width=0.49\linewidth]{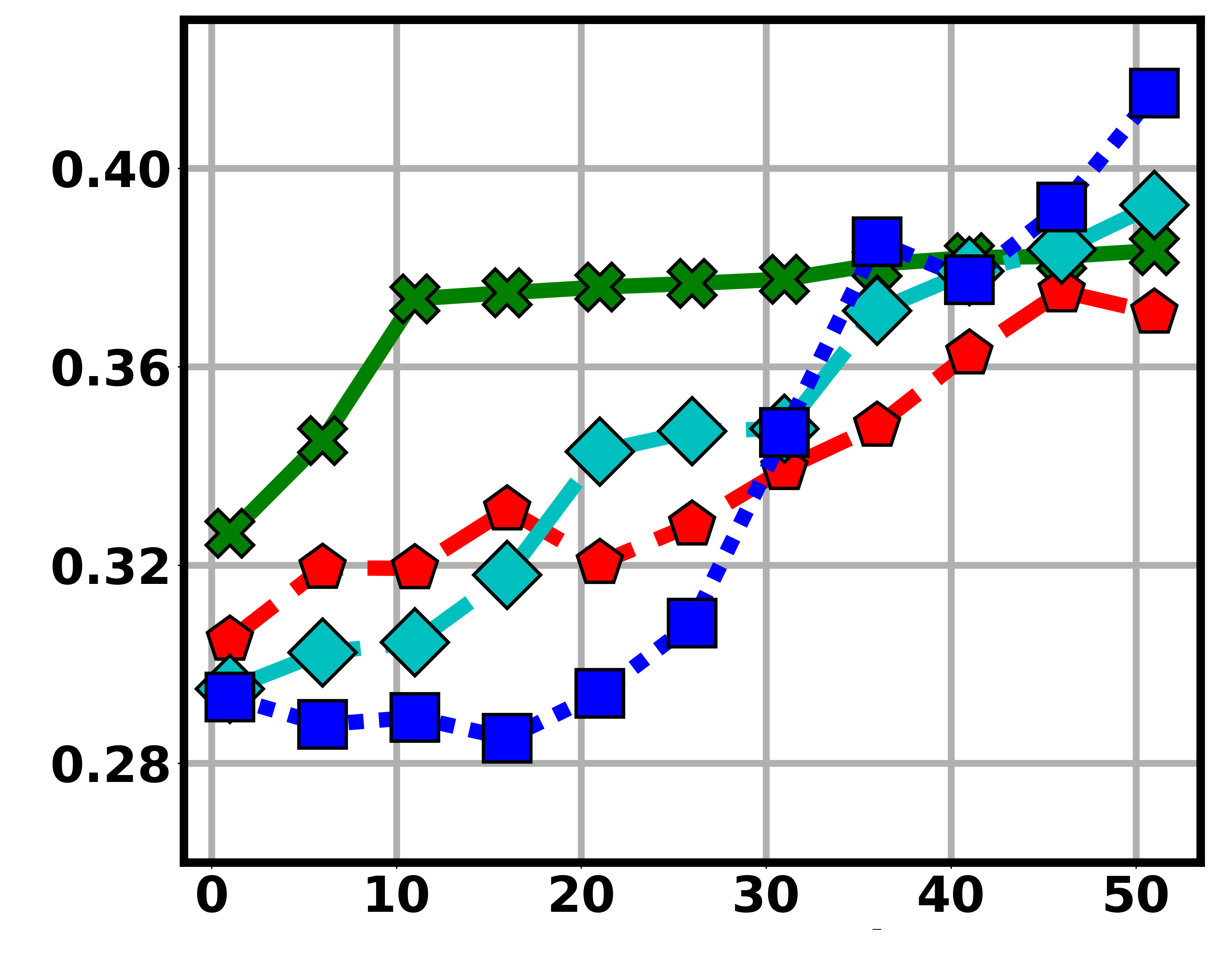}}
    \put(4.8, 52.){\includegraphics[width=0.49\linewidth]{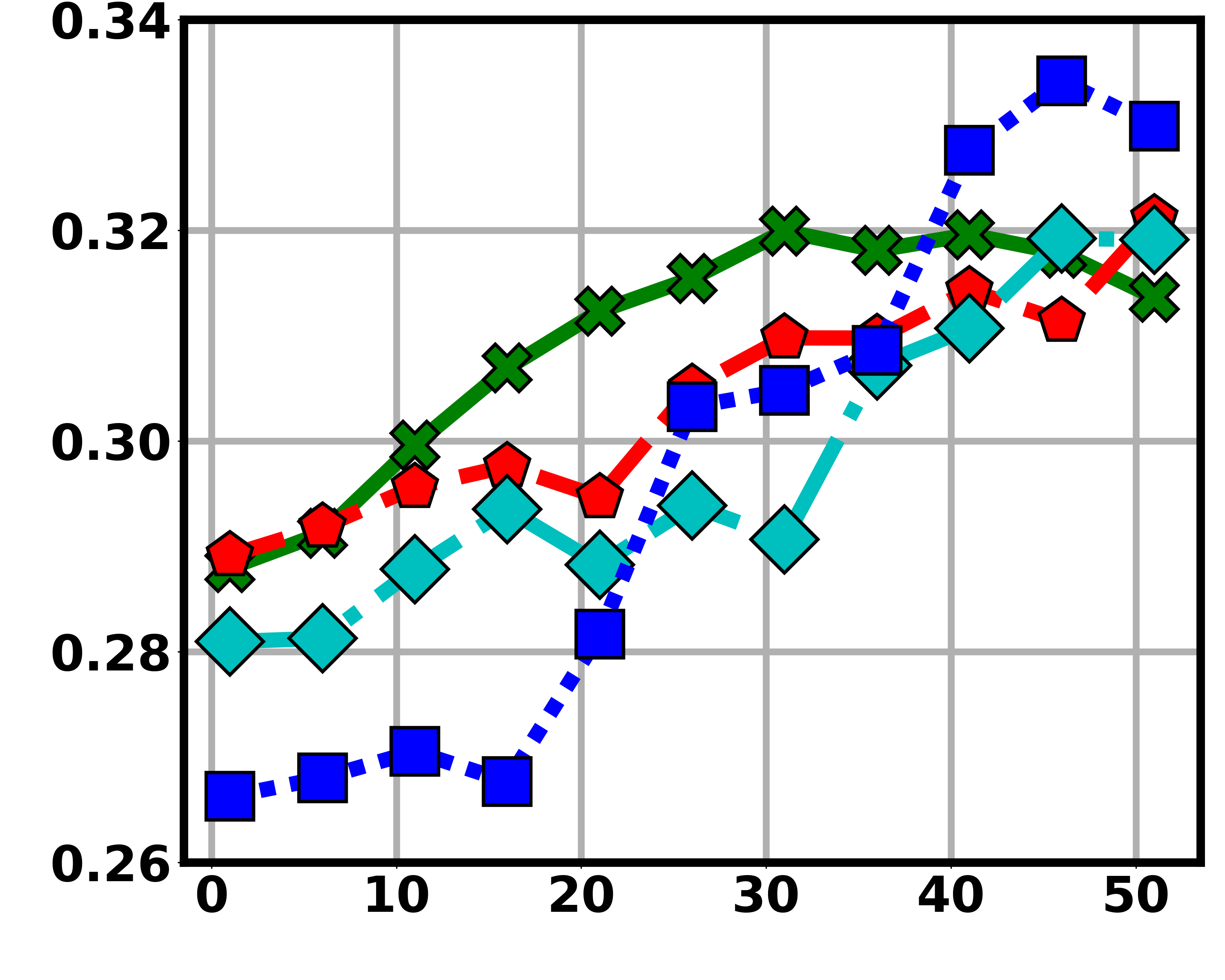}}

    \put(52, 10){\includegraphics[width=0.49\linewidth]{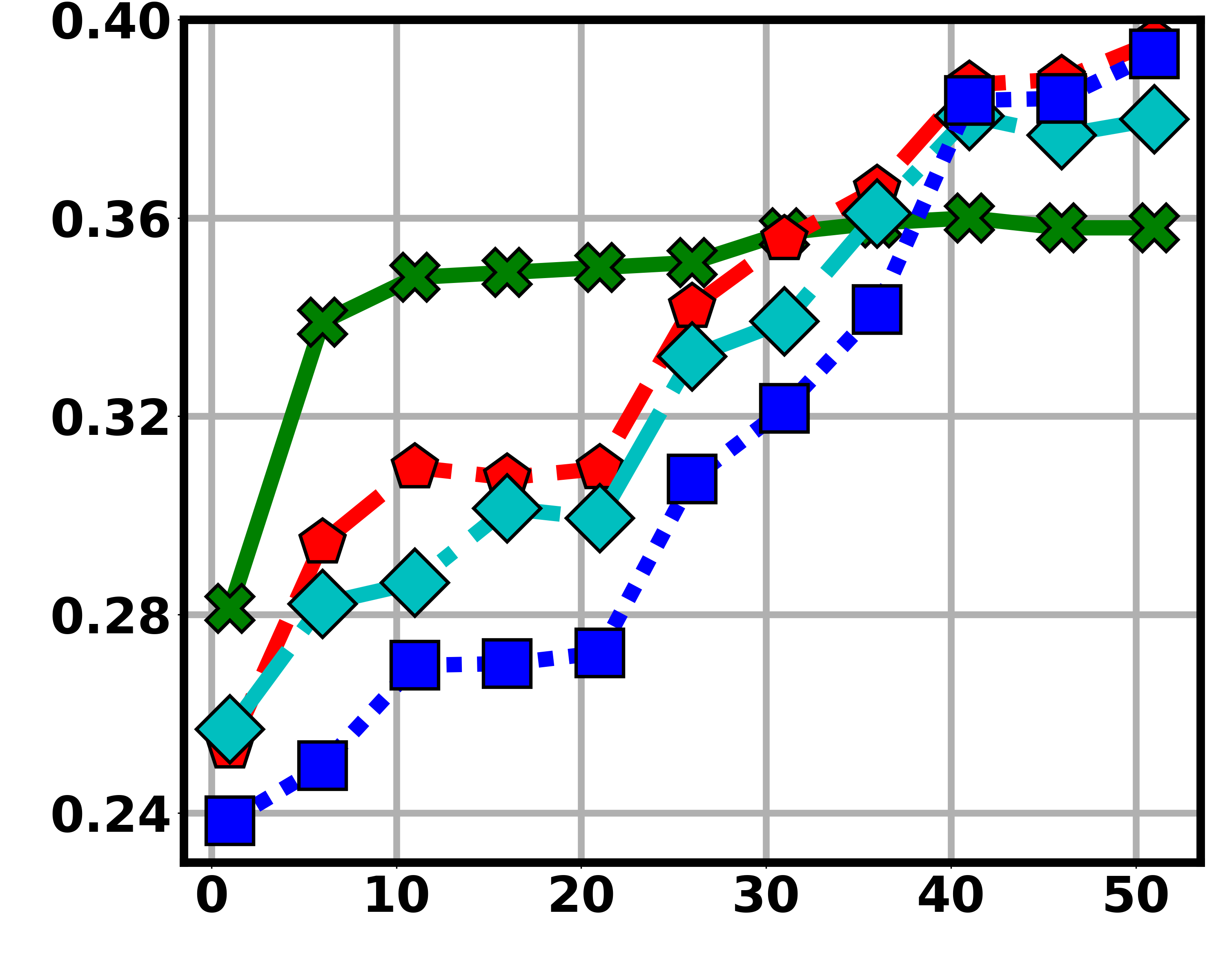}}
    \put(4.8, 10){\includegraphics[width=0.49\linewidth]{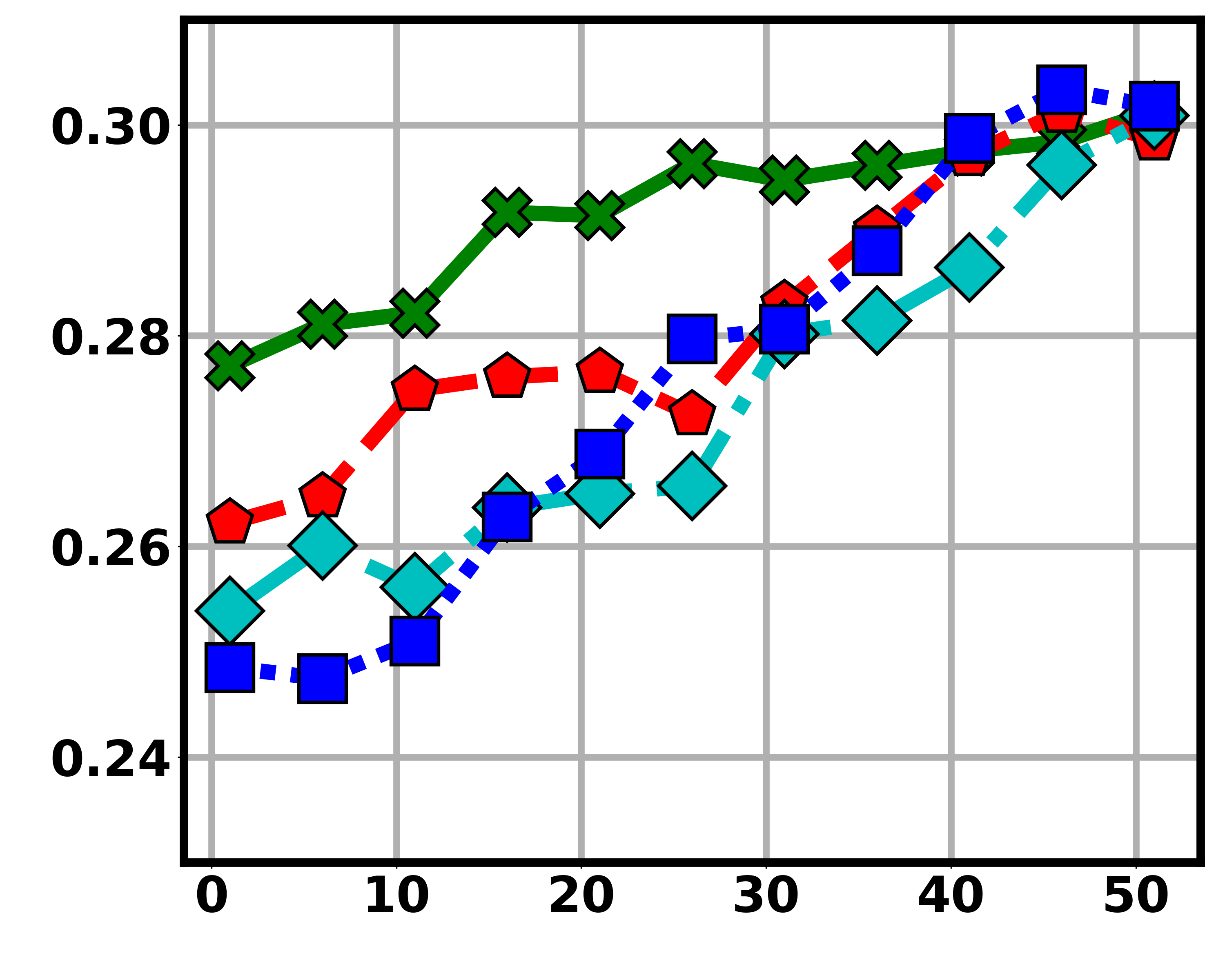}}

    \put(19, 10){\intab{\# of comm. rounds}}
    \put(66, 10){\intab{\# of comm. rounds}}

    \put(19, 52){\intab{\# of comm. rounds}}
    \put(66, 52){\intab{\# of comm. rounds}}

    \put(4.2, 24){\rotatebox{90}{\intab{ RMSE}}}
    \put(4.2, 66){\rotatebox{90}{\intab{ RMSE}}}

    \end{overpic}
    \caption{
        Reconstruction error versus the number of communication rounds with different DP noise levels. 
        During the first 20 communication rounds, stronger DP noise may reduce reconstruction error and amplify privacy risk.
        As training progresses, stronger DP noise can increase reconstruction error, thereby reducing privacy leakage.
        This phase transition in RMSE across different noise levels aligns with the insight from Conjecture C-2.
        \label{fig:claim2}}
\end{figure}

\begin{table}[!tbp]
    \centering 
    \caption{Test accuracy ($\%$) at different noise levels. }\label{tab:acc_c2}
    \begin{tabular}{p{0.1\linewidth}p{0.08\linewidth}p{0.11\linewidth}p{0.11\linewidth}p{0.11\linewidth}p{0.11\linewidth}}
        \toprule 
                & \intab[0.85]{Task}  & \intab[0.85]{No Defense} & \intab[0.85]{$\sigma=3$} & \intab[0.85]{$\sigma=5$} & \intab[0.85]{$\sigma=7$} \\ \midrule 
        \multirow{2}{0.1\linewidth}{\vspace*{-7pt} \intab[0.85]{Passive}\intab[0.85]{Listener}} & 
        \intab[0.85]{MNIST} & $80.9$ & $79.6$ & $78.3$ & $77.8$ \\  \cmidrule{2-6} 
        & \intab[0.85]{F-MNIST} & $ 73.4 $ & $65.1$ & $63.8$ & $59.2 $   \\ \midrule 
        \multirow{2}{0.1\linewidth}{\intab[0.85]{Backdoor}} & 
        \intab[0.85]{MNIST} & $  68.2 $ & $66.3$ & $64.7$ & $63.1 $ \\  \cmidrule{2-6} 
        & \intab[0.85]{F-MNIST} & $ 58.9 $ & $58.2$ & $31.0$ & $16.5 $  \\ \midrule 
         \multicolumn{2}{l}{\hspace*{-6pt} \intab[0.85]{Privacy Budget $\varepsilon$}} &  \intab[0.85]{NA} & $3.3$ & $1.7$ & $1.2$ \\ \bottomrule
    \end{tabular}
\end{table}
}

To validate Conjecture~C-2 and Remark~\ref{remark:impact_noise}, we evaluate how different local DP noise levels affect the reconstruction error. 
Specifically, we apply local DP against both a Passive Listener and a Backdoor-based attacker, using DP noise with standard deviations $\sigma \in \{3, 5, 7\}$.
Figure~\ref{fig:claim2} shows that a higher noise level does not consistently improve privacy. 
Specifically, during the first $20$ communication rounds, stronger DP noise surprisingly leads to lower reconstruction error, indicating that the attacker is more successful in reconstructing private data compared to the No Defense baselines. 
This observation supports Remark~\ref{remark:impact_noise} and Conjecture~C-2. 
As training progresses, the effect of DP noise becomes more complex. 
For example, while $\sigma=7$ initially enables better reconstruction, it eventually results in noisier gradients that hinder the successful reconstruction. 
Within the context of the maliciously curious client threat model, the protection provided by DP is not monotonic and can vary depending on the training progress.
We emphasize that local DP may inadvertently interact with the maliciously curious client's behavior to boost the attack. This is reflected in the lower reconstruction error achieved by the Backdoor poisoning case (bottom row), particularly at $\sigma = 7$ on the MNIST task, compared to the Passive Listener (top row).

TABLE~\ref{tab:acc_c2} reports the test accuracy at different noise levels at the end of the training. 
For the privacy budget $\epsilon$, we consider $\delta=10^{-5}$, estimated using the central limit theorem~\cite{dong2022gaussian}. 
The results in TABLE~\ref{tab:acc_c2} show that stronger DP noise generally leads to reduced test accuracy and corresponds to a smaller privacy budget, aligning with the standard utility-privacy trade-off in DP analysis. 
\afterpage{%
\begin{figure}[!tb]
    \begin{overpic}[width=\linewidth, height=1.4\linewidth]{fig/4x4.pdf}
    
    \put(16.5, 2){\includegraphics[width=0.62\linewidth]{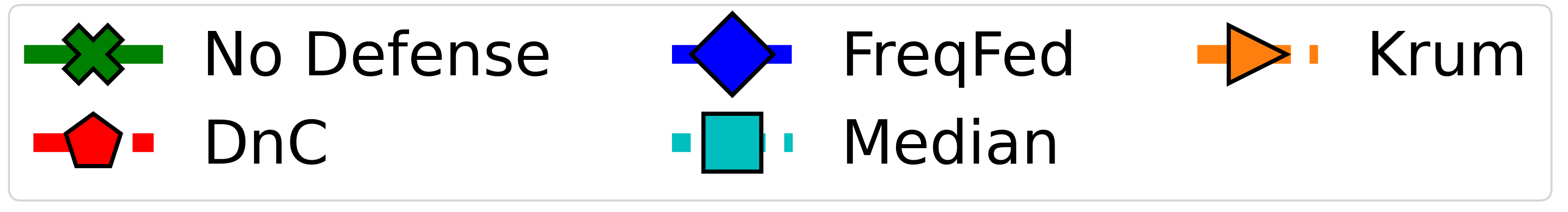}}
    
    \put(0, 77){\rotatebox{90}{\intab{\bf Passive Listener}}}
    \put(0, 49){\rotatebox{90}{\intab{\bf Backdoor}}}
    \put(0, 17){\rotatebox{90}{\intab{\bf Sign Flipping}}}

    \put(18, 99.3){\intab{\bf MNIST }}
    \put(50, 99.3){\intab{\bf F-MNIST }}

    \put(37, 10){\includegraphics[width=0.49\linewidth]{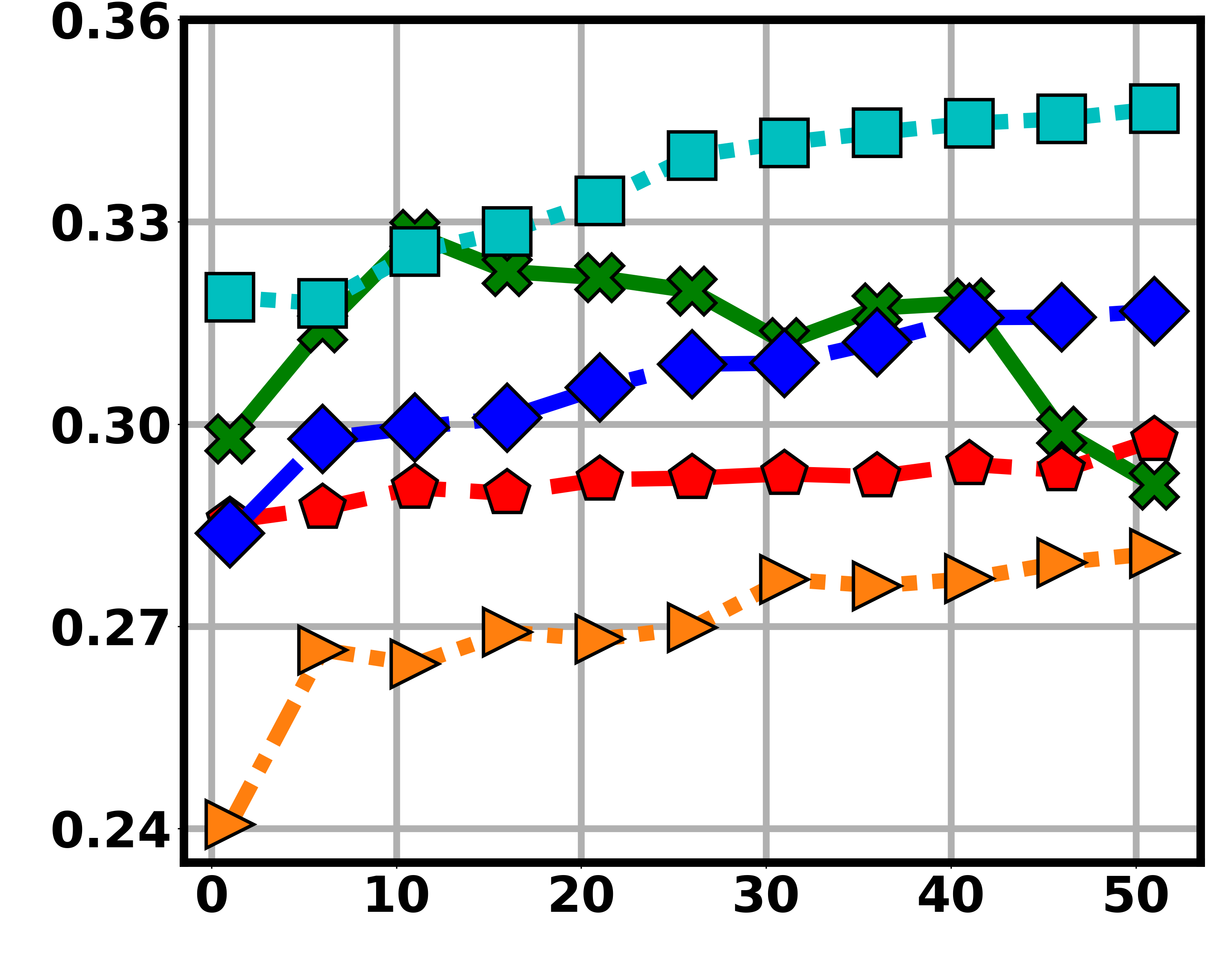}}
    \put(3.2, 10){\includegraphics[width=0.49\linewidth]{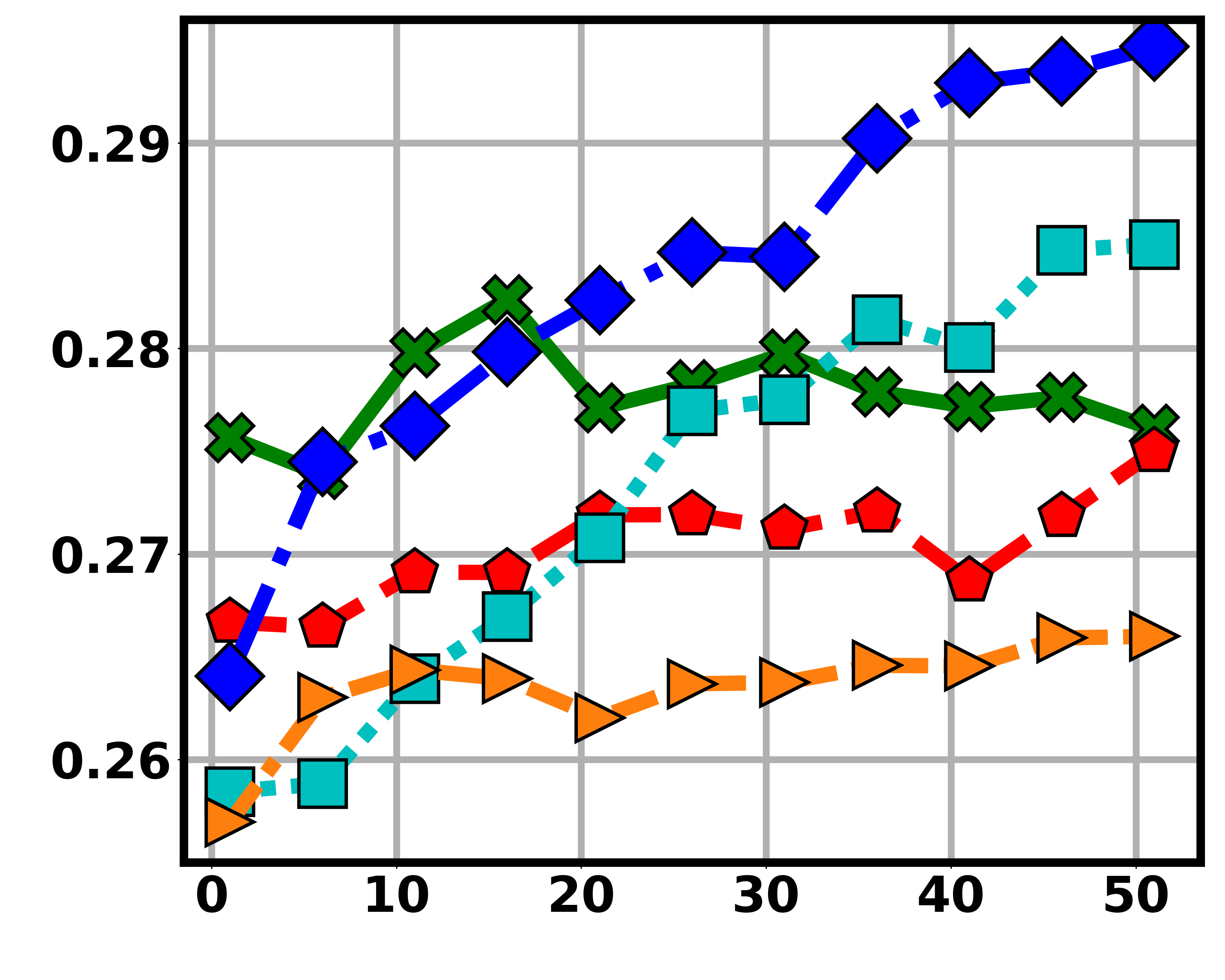}}

    \put(37, 40){\includegraphics[width=0.49\linewidth]{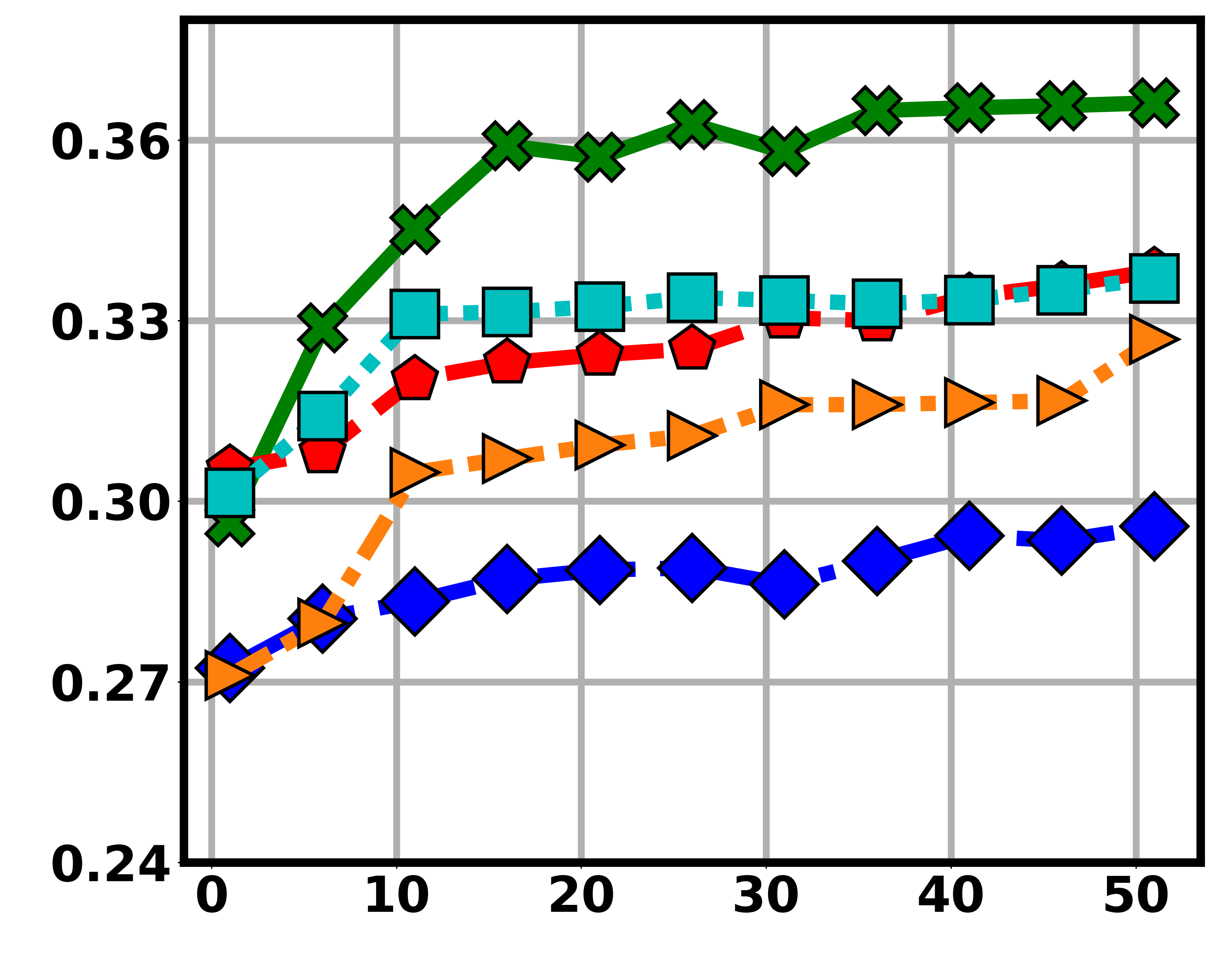}}
    \put(3.2, 40){\includegraphics[width=0.49\linewidth]{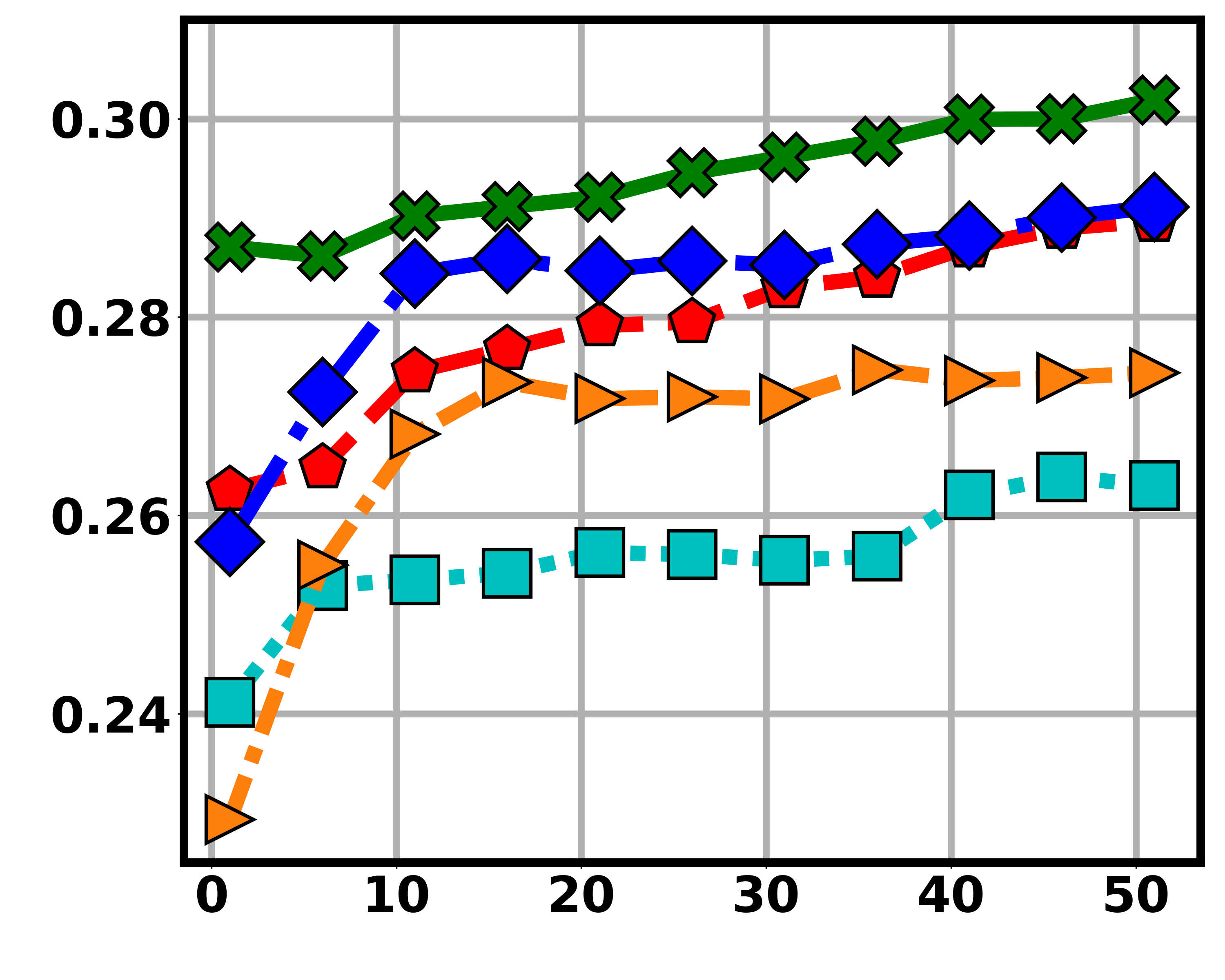}}

    \put(37, 70){\includegraphics[width=0.49\linewidth]{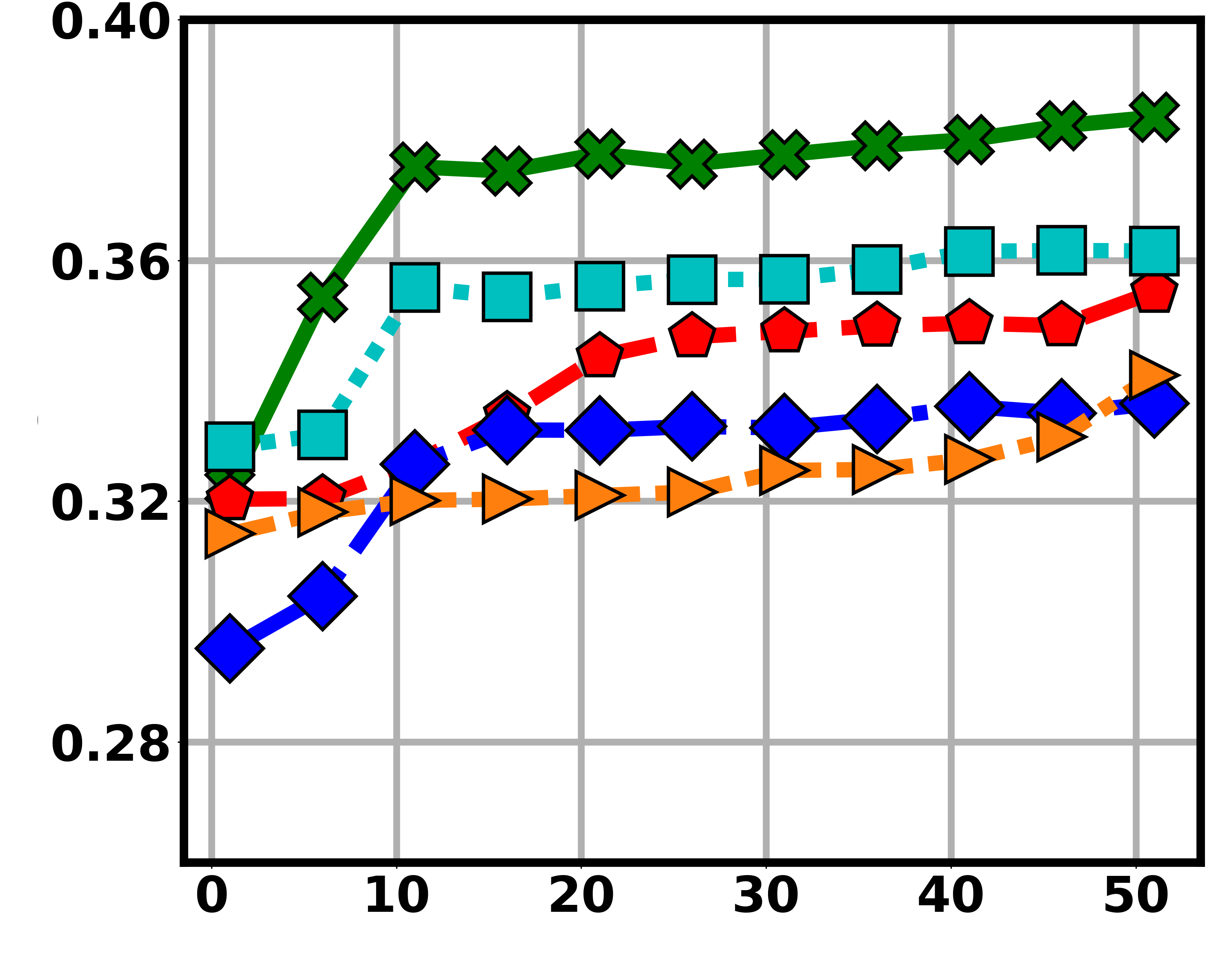}}
    \put(3.2, 70){\includegraphics[width=0.49\linewidth]{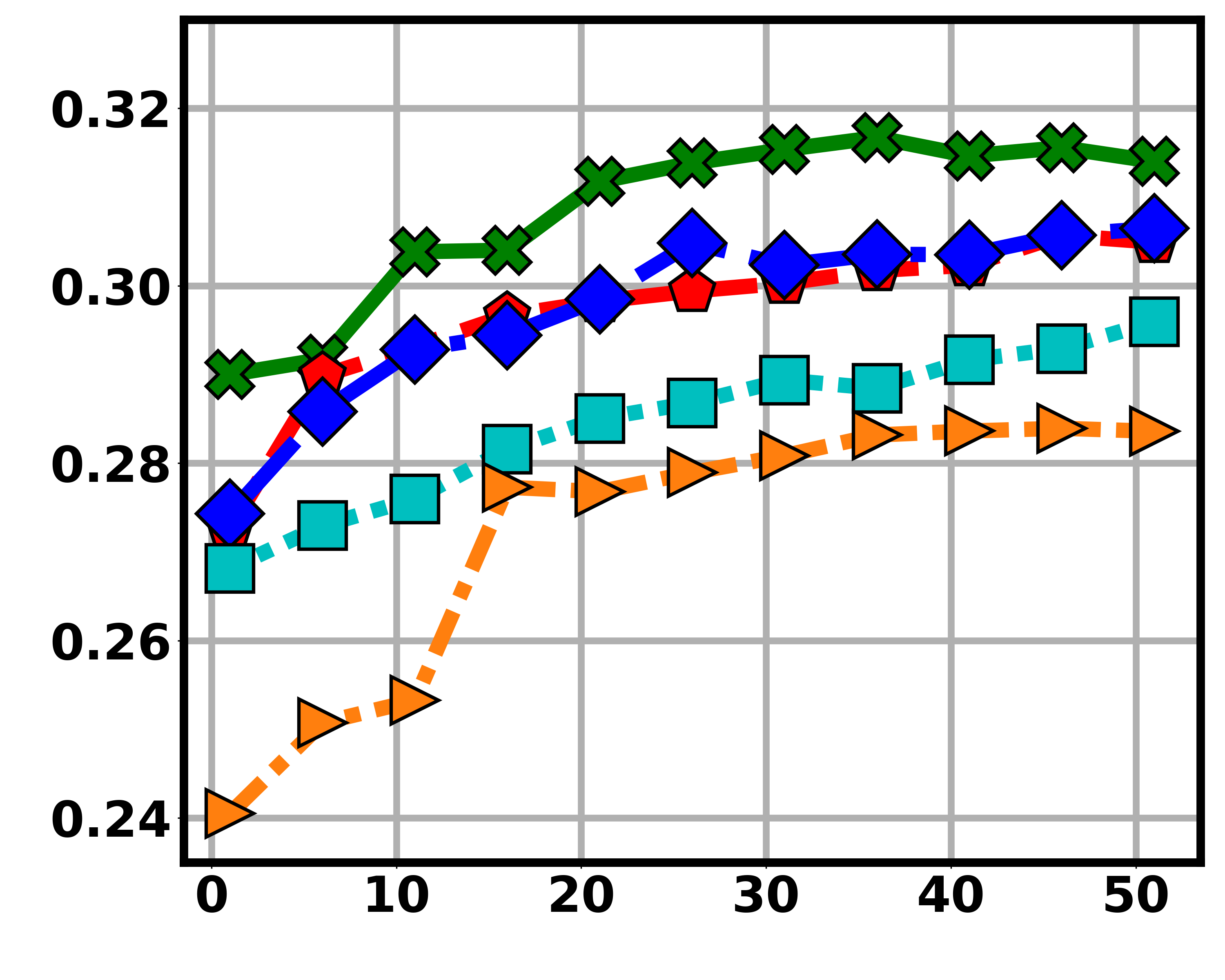}}
    
    \put(13, 10){\intab{\# of comm. rounds}}
    \put(46.8, 10){\intab{\# of comm. rounds}}

    \put(13, 40){\intab{\# of comm. rounds}}
    \put(46.8, 40){\intab{\# of comm. rounds}}

    \put(13, 70){\intab{\# of comm. rounds}}
    \put(46.8, 70){\intab{\# of comm. rounds}}

    \put(2.5, 20){\rotatebox{90}{\intab{ RMSE}}}
    \put(2.5, 50){\rotatebox{90}{\intab{ RMSE}}}
    \put(2.5, 80){\rotatebox{90}{\intab{ RMSE}}}

    \end{overpic}
    \vspace*{-10pt}
    \caption{
    Reconstruction error versus the number of communication rounds under different Byzantine-robust defenses across poisoning functions, including Passive Listener (top), Backdoor Attack (middle), and Sign Flipping (bottom). 
    For the Passive Listener and Backdoor attacks, adopting Byzantine-robust defenses may unintentionally ease the reconstruction attack compared to the ``No Defense'' baseline.
    This observation is consistent with Conjecture C-3. 
    \label{fig:claim3}}
\end{figure}

\begin{table}[!tbp]
    \centering 
    \caption{Test accuracy ($\%$)  under different robust aggregation methods. }\label{tab:acc_c3}
    \begin{tabular}{p{0.09\linewidth}p{0.08\linewidth}p{0.09\linewidth}p{0.09\linewidth}p{0.09\linewidth}p{0.09\linewidth}p{0.09\linewidth}}
        \toprule 
        & \intab[0.85]{Task}  & \intab[0.8]{No Defense} & \intab[0.85]{DnC} & \intab[0.85]{FreqFed} & \intab[0.85]{Median} & \intab[0.85]{Krum} \\ \midrule 
        \multirow{2}{0.1\linewidth}{\vspace*{-7pt} \intab[0.85]{Passive} \intab[0.85]{Listener}} & 
        \intab[0.85]{MNIST} & $82.0$ & $69.8$ & $74.7$ & $63.3$ & $59.2$ \\  \cmidrule{2-7} 
        & \intab[0.85]{F-MNIST} & $ 72.7 $ & $59.6$ & $55.4$ & $60.5$ & $47.9 $   \\ \midrule 
        \multirow{2}{0.1\linewidth}{\intab[0.85]{Backdoor}} & 
        \intab[0.85]{MNIST} & $  69.4 $ & $72.3$ & $72.5$ & $59.0$ & $57.7 $ \\  \cmidrule{2-7} 
        & \intab[0.85]{F-MNIST} & $ 59.6 $ & $61.2$ & $63.1$ & $47.9 $ & $32.3$ \\ \midrule 
        \multirow{2}{0.1\linewidth}{\vspace*{-8pt} \intab[0.85]{Sign} \intab[0.85]{Flipping} \intab[0.85]{}} & 
        \intab[0.85]{MNIST} & $  49.5 $ & $58.0$ & $67.2$ & $62.3$ & $53.2 $ \\  \cmidrule{2-7} 
        & \intab[0.85]{F-MNIST} & $ 45.6 $ & $60.9$ & $51.2$ & $60.4 $ & $43.1$  \\ \bottomrule 
        \end{tabular}
\end{table}
}

However, Figure~\ref{fig:claim2} highlights a different interplay between utility and DP noise level, where increasing the noise strength with the intention to enhance privacy may lead to lower reconstruction error and thereby amplify privacy leakage.


\subsection{Effectiveness of Server-Side Robust Defenses}\label{sec:byzantine}

To validate Conjecture~C-3, we evaluate how robust aggregation defenses may affect privacy in both directions. 
Experiments are conducted across three types of maliciously curious client attacks, including Passive Listener, Backdoor Attack, and Sign Flipping. 
Backdoor Attack does not directly manipulate the gradients and can be considered a more stealthy poisoning compared to Sign Flipping. 
Within each subplot of Figure~\ref{fig:claim3}, we compare the reconstruction error across different Byzantine-robust defenses, including Krum, Median, DnC, and FreqFed, as well as the No Defense baseline.

In the top row with the Passive Listener attack scenario, it can be observed that adopting Byzantine robust defenses unintentionally ease the reconstruction attack compared to the baseline, which validates Conjecture~C-3.
A similar phenomenon can be observed from the middle row with Backdoor poisoning. 
In the bottom row, when defenses effectively suppress the attack, such as Median against Sign Flipping on the F-MNIST task in which the reconstruction error is increased, indicating improved privacy protection.
We caution that, however, misaligned defenses can backfire. 
For example, adopting DnC defense against the Sign Flipping attack may result in an easier reconstruction attack compared to the No Defense baseline, essentially amplifying the privacy leakage. 
In practice, the attacker may switch the poisoning function in different communication rounds to further increase the efficacy of its attack.

We additionally show the final test accuracy in TABLE~\ref{tab:acc_c3}. 
For the Passive Listener attack, Byzantine-robust defenses result in lower test accuracy compared to the No Defense baseline. 
This reflects that improving robustness often comes at the cost of reduced model accuracy. 
Notably, even when such defenses as DnC restore model accuracy from the No Defense baseline to a certain extent, the reconstruction attack could be more successful than the No Defense baseline (see the bottom row of Figure~\ref{fig:claim3}).
These observations illustrate the distinct interplay between robustness and privacy under the proposed maliciously curious client threat model, supporting Conjecture~C-3. 
Applying Byzantine-robust defenses with the intention to enhance robustness may sometimes amplify privacy leakage.


\subsection{Combined Defenses}\label{sec:combined_defense}

\afterpage{%
\begin{figure}[!tb]
    \begin{overpic}[width=\linewidth, height=1.4\linewidth]{fig/4x4.pdf}
    
    \put(14.5, 2){\includegraphics[width=0.72\linewidth]{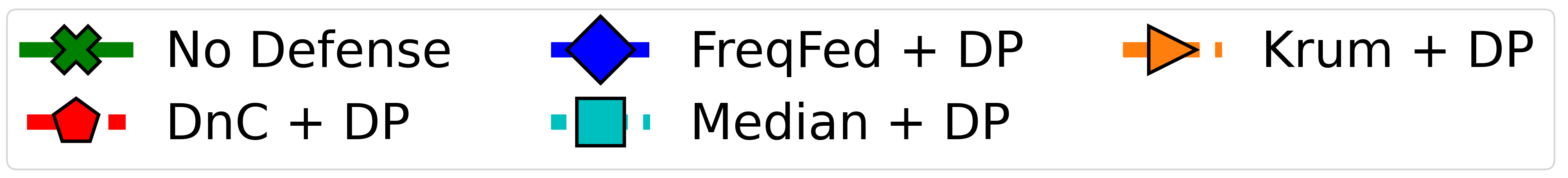}}
    
    \put(0, 77){\rotatebox{90}{\intab{\bf Passive Listener}}}
    \put(0, 49){\rotatebox{90}{\intab{\bf Backdoor}}}
    \put(0, 17){\rotatebox{90}{\intab{\bf Sign Flipping}}}

    \put(18, 99.3){\intab{\bf MNIST }}
    \put(50, 99.3){\intab{\bf F-MNIST }}

    \put(37, 10){\includegraphics[width=0.49\linewidth]{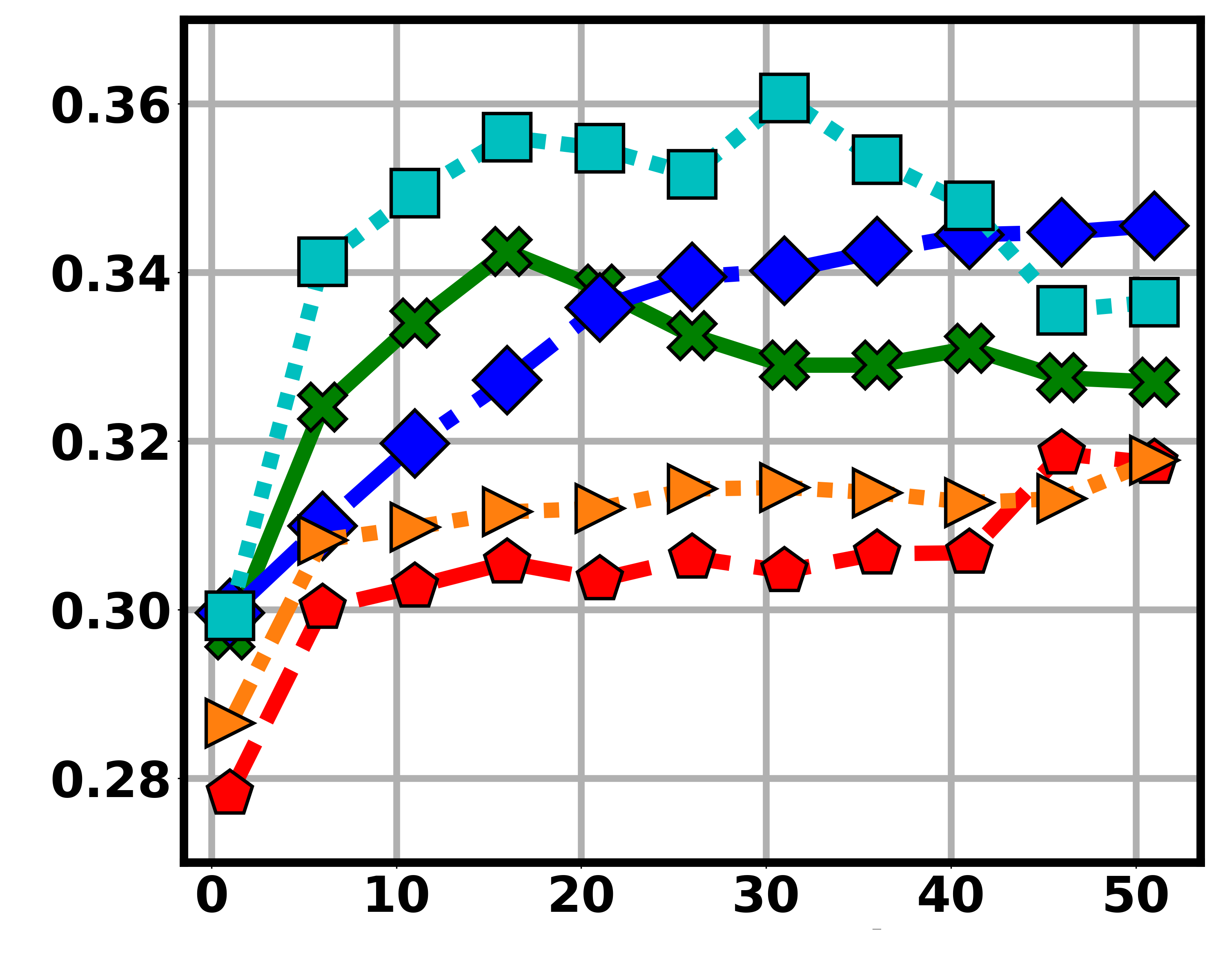}}
    \put(3.2, 10){\includegraphics[width=0.49\linewidth]{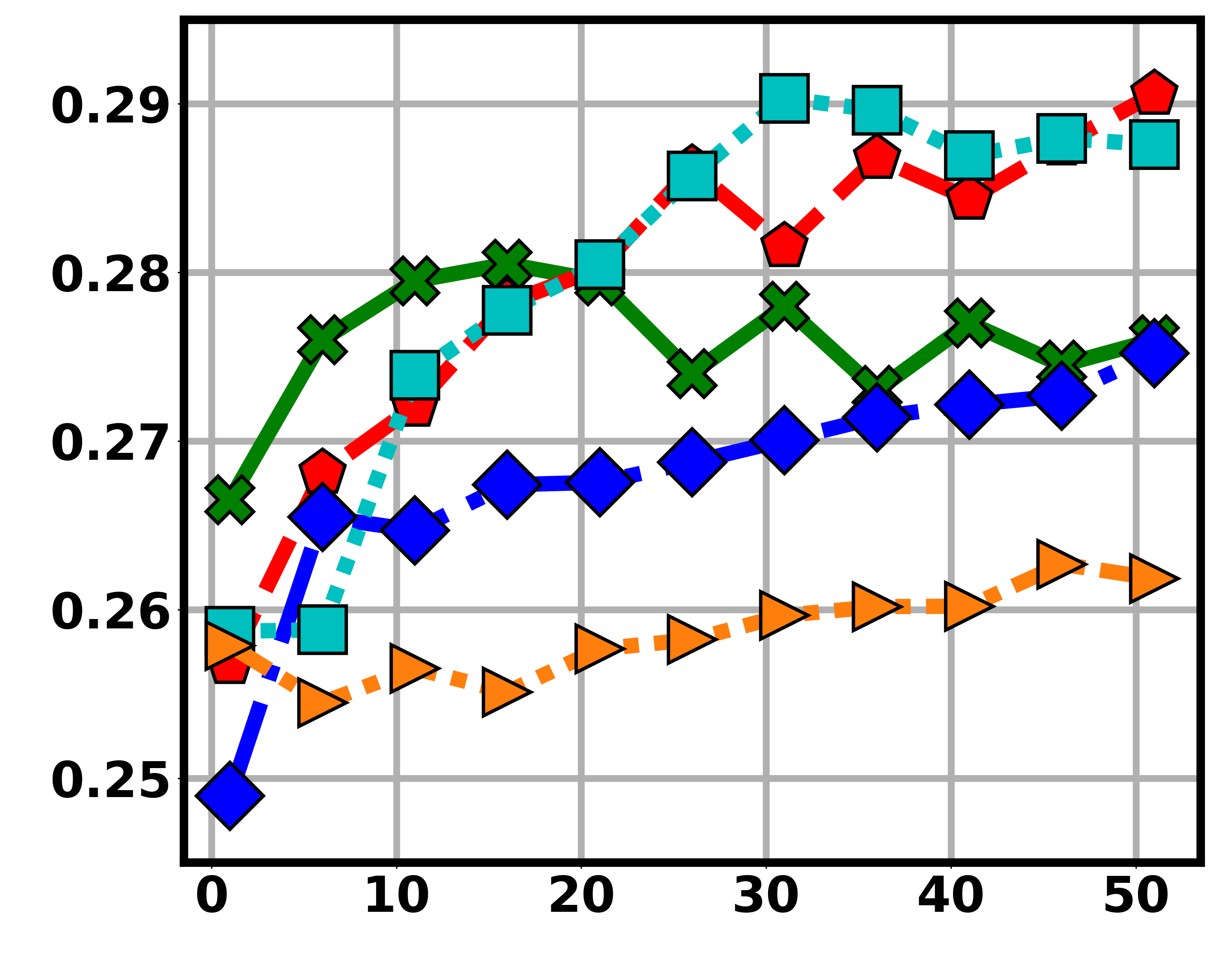}}

    \put(37, 40){\includegraphics[width=0.49\linewidth]{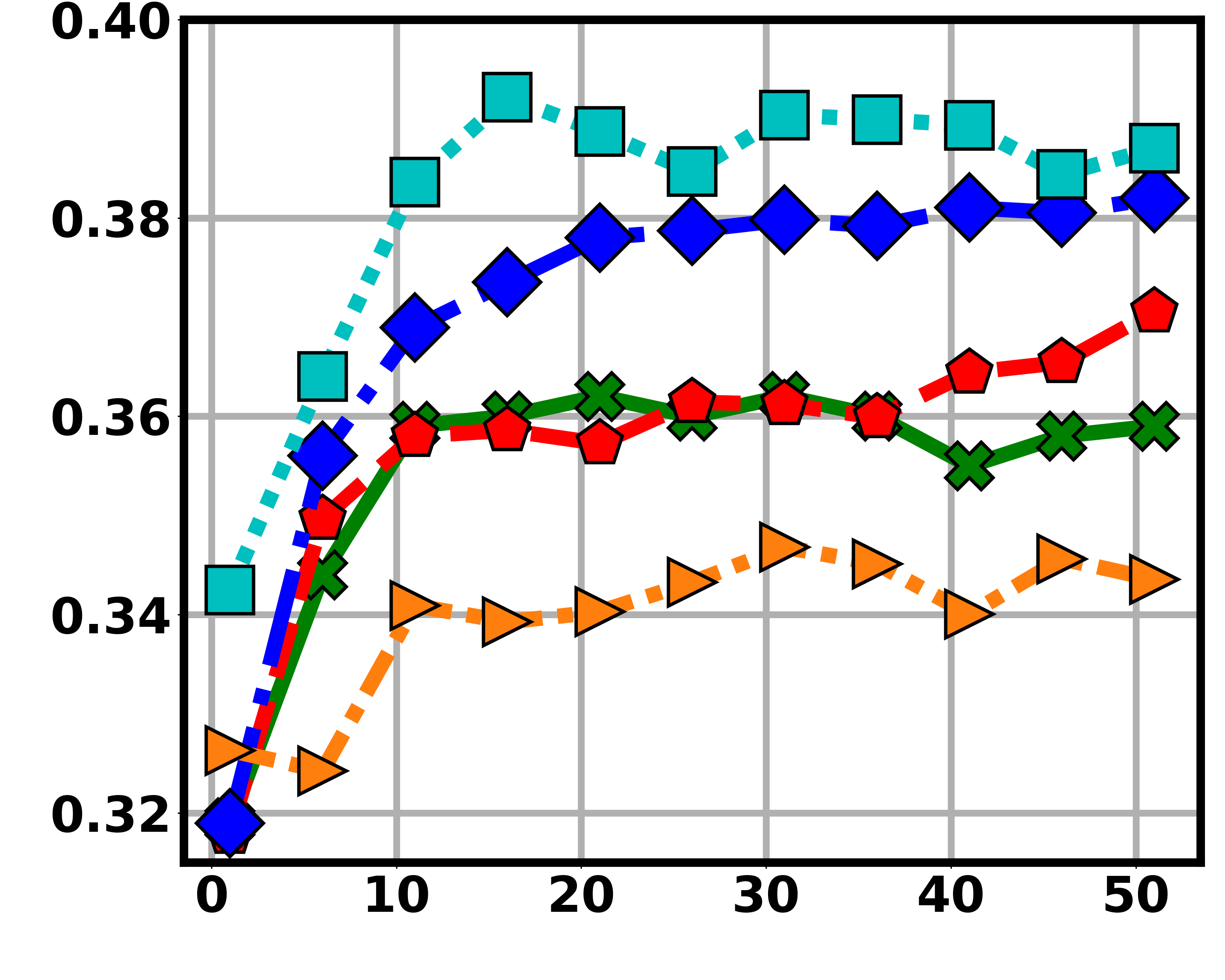}}
    \put(3.2, 40){\includegraphics[width=0.49\linewidth]{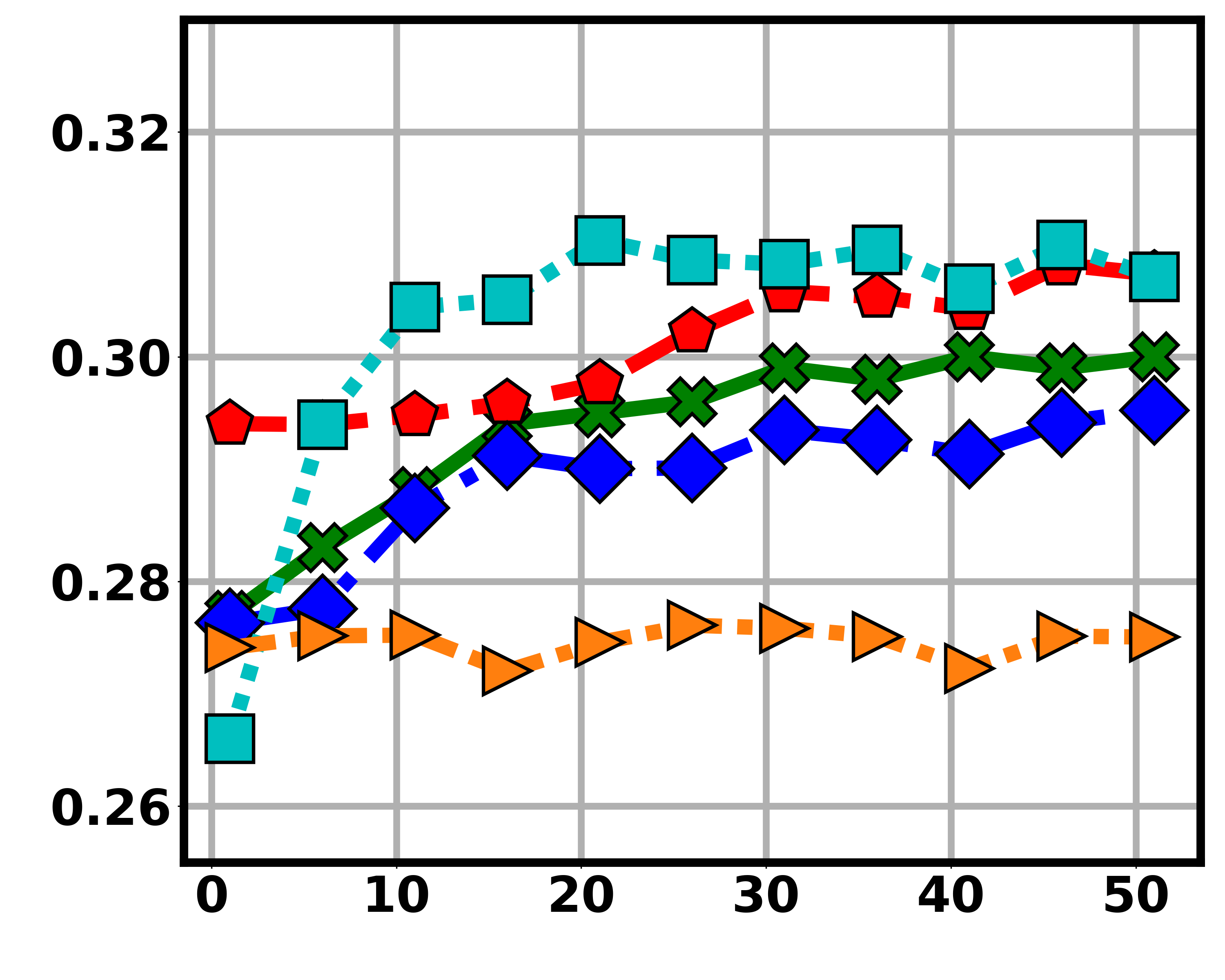}}

    \put(37, 70){\includegraphics[width=0.49\linewidth]{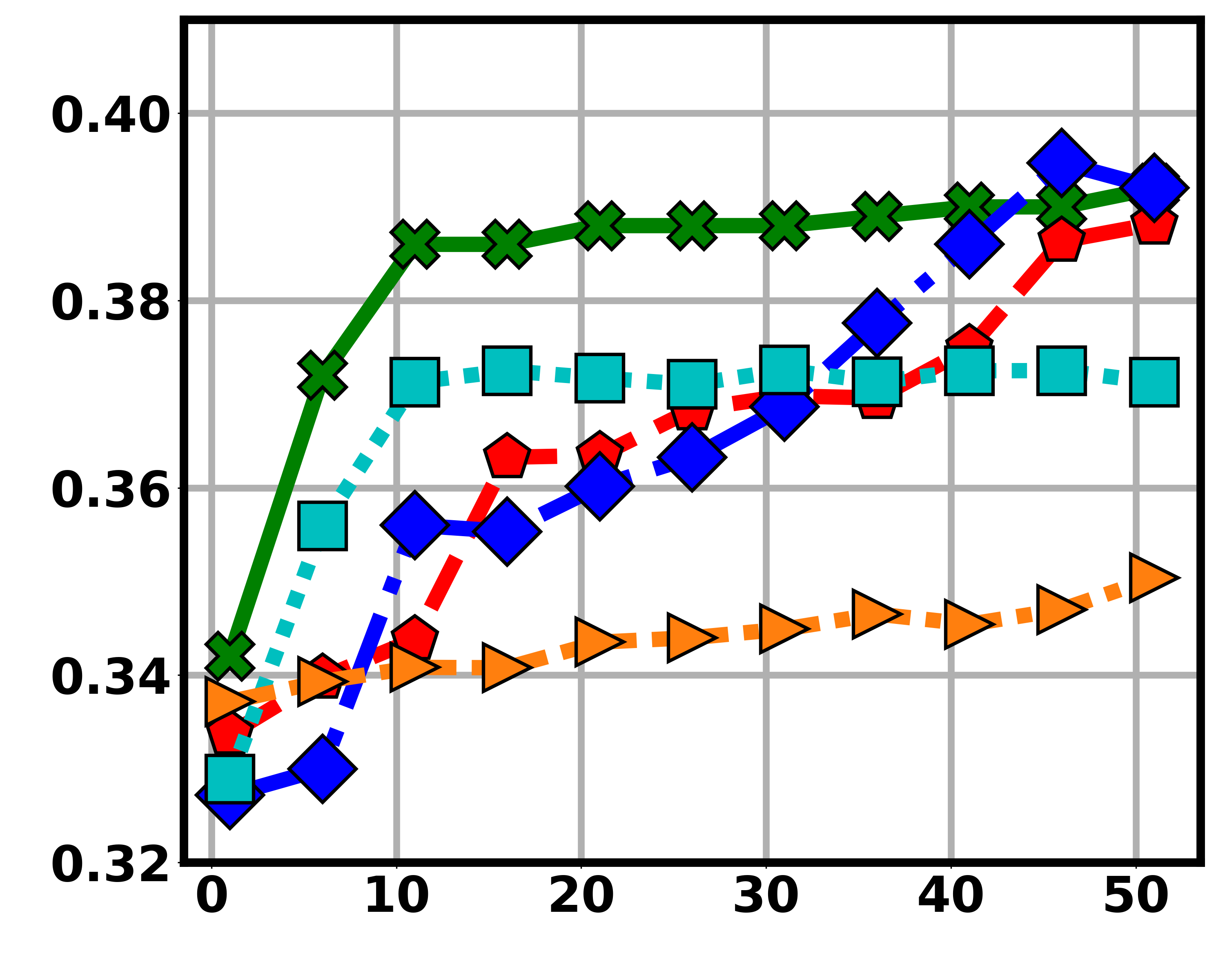}}
    \put(3.2, 70){\includegraphics[width=0.49\linewidth]{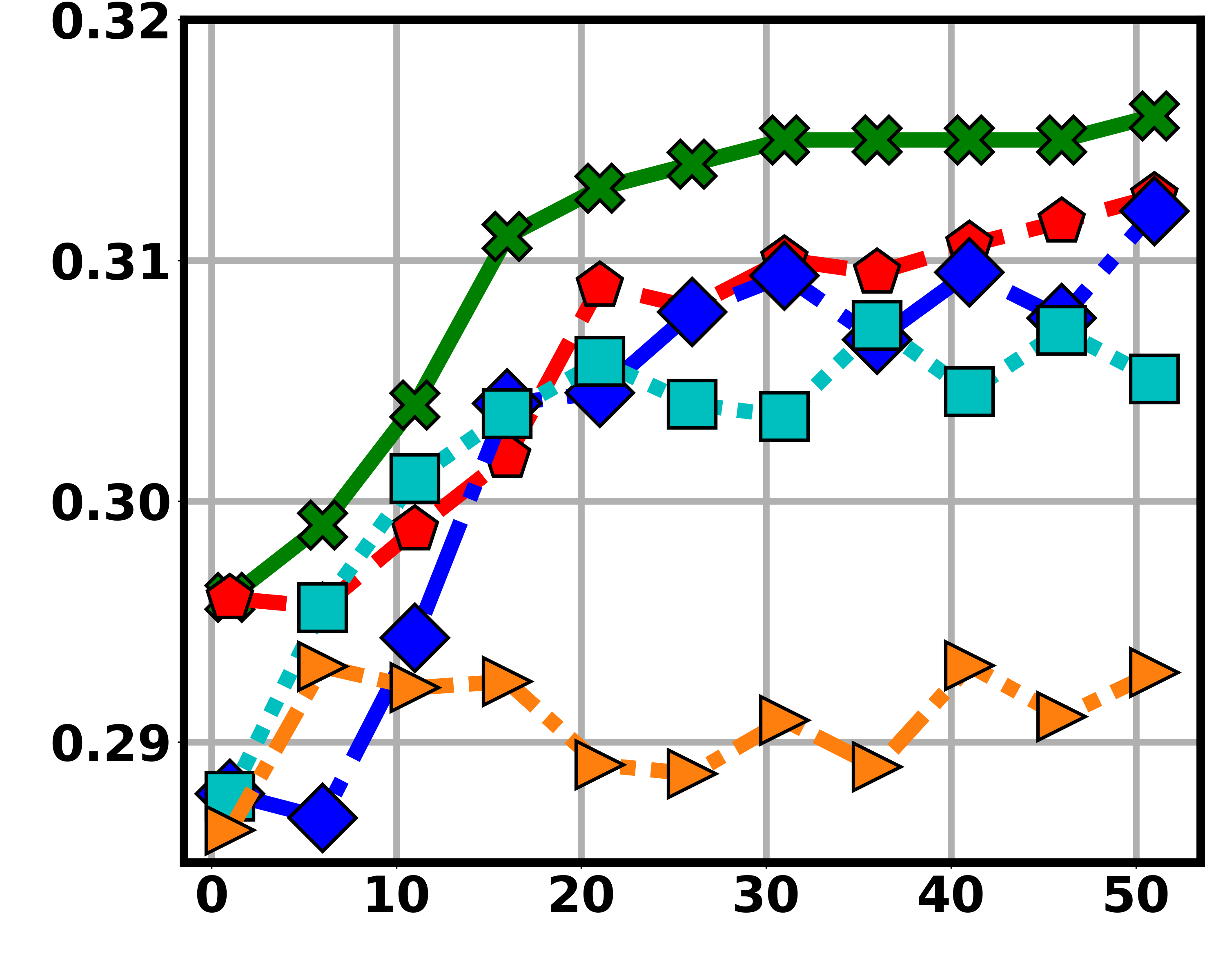}}
    
    \put(13, 10){\intab{\# of comm. rounds}}
    \put(46.8, 10){\intab{\# of comm. rounds}}

    \put(13, 40){\intab{\# of comm. rounds}}
    \put(46.8, 40){\intab{\# of comm. rounds}}

    \put(13, 70){\intab{\# of comm. rounds}}
    \put(46.8, 70){\intab{\# of comm. rounds}}

    \put(2.5, 20){\rotatebox{90}{\intab{ RMSE}}}
    \put(2.5, 50){\rotatebox{90}{\intab{ RMSE}}}
    \put(2.5, 80){\rotatebox{90}{\intab{ RMSE}}}

    \end{overpic}
    \vspace*{-10pt}
    \caption{
    Reconstruction error versus the number of communication rounds under combined defenses. 
    For the Passive Listener, adopting combined defenses may reduce the RMSE achieved by the non-defense server.
    Under Backdoor and Sign Flipping, a mistakenly used combination (e.g., Krum + DP) may amplify privacy leakage compared to the No Defense baseline.
    \label{fig:claim4}}
\end{figure}

\begin{table}[!tbp]
    \centering 
    \caption{Test accuracy ($\%$)  under different combined defenses. }\label{tab:acc_c4}
    \begin{tabular}{p{0.09\linewidth}p{0.08\linewidth}p{0.09\linewidth}p{0.09\linewidth}p{0.09\linewidth}p{0.09\linewidth}p{0.09\linewidth}}
        \toprule 
        & \intab[0.85]{\vspace{10pt} Task}  & \intab[0.85]{No} \intab[0.85]{Defense} & \intab[0.85]{DnC} \intab[0.85]{+DP} & \intab[0.85]{FreqFed } \intab[0.85]{+DP} & \intab[0.85]{Median } \intab[0.85]{+DP} & \intab[0.85]{Krum } \intab[0.85]{+DP} \\ \midrule 
        \multirow{2}{0.1\linewidth}{\vspace*{-7pt} \intab[0.85]{Passive} \intab[0.85]{Listener}} & 
        \intab[0.85]{MNIST} & $81.7$ & $70.1$ & $71.9$ & $60.2$ & $49.3$ \\  \cmidrule{2-7} 
        & \intab[0.85]{F-MNIST} & $ 73.4 $ & $56.4$ & $53.0$ & $63.5$ & $59.2 $   \\ \midrule 
        \multirow{2}{0.1\linewidth}{\intab[0.85]{Backdoor}} & 
        \intab[0.85]{MNIST} & $  67.3 $ & $59.2$ & $68.5$ & $58.6$ & $40.7 $ \\  \cmidrule{2-7} 
        & \intab[0.85]{F-MNIST} & $ 58.9 $ & $52.8$ & $63.1$ & $50.4 $ & $17.6$ \\ \midrule 
        \multirow{2}{0.1\linewidth}{\vspace*{-8pt} \intab[0.85]{Sign} \intab[0.85]{Flipping} \intab[0.85]{}} & 
        \intab[0.85]{MNIST} & $  48.5 $ & $68.0$ & $64.7$ & $62.3$ & $53.2 $ \\  \cmidrule{2-7} 
        & \intab[0.85]{F-MNIST} & $ 43.9 $ & $59.6$ & $52.4$ & $58.5 $ & $31.8$  \\ \bottomrule 
        \end{tabular}
\end{table}
}


    




We now investigate whether combining client-side with server-side defenses can more effectively protect against data reconstruction. 
We set $\sigma = 3$ in client-side local DP and apply the server-side robust defenses evaluated in Section~\ref{sec:byzantine}. 
Figure~\ref{fig:claim4} shows that, in some cases, combining local DP with Byzantine-resilient defenses increases the reconstruction error compared to using Byzantine defenses alone, indicating improved privacy. 
For example, comparing the bottom row of Figure~\ref{fig:claim3} with the bottom row of Figure~\ref{fig:claim4} on F-MNIST, we observe that adding local DP increases the reconstruction error under DnC and Median aggregation, confirming DP's privacy benefit in certain scenarios.
However, this improvement is not consistent. 
In the top row of the Passive Listener attack scenario, the combined defense still yields lower reconstruction error than the No Defense baseline.
Moreover, in the second and bottom rows, combining Krum with DP consistently yields lower reconstruction error than the No Defense baseline.
These results highlight the complex interaction between privacy and robustness in FL.
To address this challenge, we recommend exploring advanced protocols such as stochastic sign-based SGD~\cite{jin2020stochastic, li2022communication}, comprehensively integrating quantization, local DP, and robust aggregation into the FL framework.
We leave the investigation of these advanced methods for future work.

We also report the final test accuracy in TABLE~\ref{tab:acc_c4} to reflect the impact of combining local DP with Byzantine defenses on model performance. 
For the Passive Listener attack scenario, the combined defenses generally reduce test accuracy compared to the No Defense baseline, reflecting the trade-offs introduced by robustness and local DP. 
For the Sign Flipping attack, certain combinations, such as FreqFed + DP on the MNIST task, may improve accuracy relative to the No Defense baseline. 
However, this does not necessarily translate to stronger privacy protection, as the reconstruction error in Figure~\ref{fig:claim4} remains lower, indicating more severe privacy leakage.


\subsection{Impact of Initial Conditions}

\afterpage{

\begin{table}[!tbp]
    \centering 
    \caption{Reconstruction error (RMSE, $\times 10^{-1}$) with different initialization methods. }\label{tab:reconstruction error_s4}
    \begin{tabular}{p{0.1\linewidth}p{0.1\linewidth}p{0.16\linewidth}p{0.16\linewidth}p{0.16\linewidth}}
        \toprule 
                & \intab[0.85]{Task}  & \intab[0.85]{Kaiming~\cite{he2015delving}} & \intab[0.85]{LeCun~\cite{lecun2002efficient}} & \intab[0.85]{Orthogonal~\cite{saxe2013exact}} \\ \midrule 
        \multirow{2}{0.1\linewidth}{\intab[0.85]{Passive}} & 
        \intab[0.85]{MNIST} & $2.91$ & $2.85$ & $2.77$ \\  \cmidrule{2-5} 
        & \intab[0.85]{F-MNIST} & $ 3.92 $ & $3.52$ & $3.36$     \\ \midrule 
        \multirow{2}{0.1\linewidth}{\intab[0.85]{Backdoor}} & 
        \intab[0.85]{MNIST} & $  2.78 $ & $2.65$ & $2.47$  \\  \cmidrule{2-5} 
        & \intab[0.85]{F-MNIST} & $ 3.88 $ & $3.35$ & $3.23$ \\ \midrule 
    \end{tabular}
\end{table}

\begin{figure}[!tb]
    \begin{overpic}[width=\linewidth, height=0.6\linewidth]{fig/4x4.pdf}
        \put(18, 40){\includegraphics[width=0.1\linewidth]{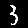}}
        \put(30, 40){\includegraphics[width=0.1\linewidth]{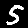}}
        \put(18, 28.5){\includegraphics[width=0.1\linewidth]{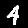}}
        \put(30, 28.5){\includegraphics[width=0.1\linewidth]{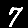}}
        
        \put(48, 40){\includegraphics[width=0.1\linewidth]{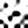}}
        \put(60, 40){\includegraphics[width=0.1\linewidth]{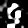}}
        \put(48, 28.5){\includegraphics[width=0.1\linewidth]{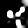}}
        \put(60, 28.5){\includegraphics[width=0.1\linewidth]{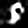}}
    
        \put(78, 40){\includegraphics[width=0.1\linewidth]{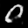}}
        \put(90, 40){\includegraphics[width=0.1\linewidth]{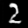}}
        \put(78, 28.5){\includegraphics[width=0.1\linewidth]{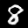}}
        \put(90, 28.5){\includegraphics[width=0.1\linewidth]{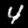}}

        \put(0, 38){\intab[0.85]{MNIST}}


        \put(18, 14){\includegraphics[width=0.1\linewidth]{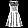}}
        \put(30, 14){\includegraphics[width=0.1\linewidth]{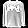}}
        \put(18, 2.5){\includegraphics[width=0.1\linewidth]{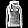}}
        \put(30, 2.5){\includegraphics[width=0.1\linewidth]{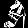}}
        
        \put(48, 14){\includegraphics[width=0.1\linewidth]{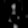}}
        \put(60, 14){\includegraphics[width=0.1\linewidth]{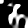}}
        \put(48, 2.5){\includegraphics[width=0.1\linewidth]{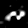}}
        \put(60, 2.5){\includegraphics[width=0.1\linewidth]{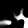}}
    
        \put(78, 14){\includegraphics[width=0.1\linewidth]{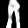}}
        \put(90, 14){\includegraphics[width=0.1\linewidth]{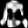}}
        \put(78, 2.5){\includegraphics[width=0.1\linewidth]{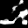}}
        \put(90, 2.5){\includegraphics[width=0.1\linewidth]{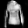}}

        \put(0, 12){\intab[0.85]{F-MNIST}}

        \put(16.5, 54){\intab[0.85]{Private Examples}}
        \put(47.5, 54){\intab[0.85]{InvertGrad~\cite{geiping2020inverting}}}
        \put(75.5, 54){\intab[0.85]{Proposed Attack}}

    \end{overpic}
    \caption{
    Comparison of the reconstructed images using the proposed attack method with InvertGrad~\cite{geiping2020inverting}. 
    The proposed method consistently yields more recognizable shapes and features. 
    \label{fig:recon_examples}
    }
\end{figure}
}

Finally, we explore the role of initial conditions in data reconstruction, as we have commented in Remark~\ref{remark:initial_cond}. 
From Section~\ref{sec:impact_progress}--\ref{sec:combined_defense}, we can observe that the increased task complexity from MNIST to F-MNIST results in a higher reconstruction error in all settings. 
This suggests that the base error term $e^{(0)}$ in Theorem~\ref{theorem:non_convex} is affected by the characteristics of the dataset and the complexity of the learning task. 
We further conduct case studies focusing on two factors: (i) initialization methods for model parameters and (ii) the choice of reconstruction algorithm. 
We begin by comparing three widely used initialization schemes, the Kaiming method~\cite{he2015delving}, the LeCun method~\cite{lecun2002efficient}, and the Orthogonal initialization~\cite{saxe2013exact}. 
TABLE~\ref{tab:reconstruction error_s4} reports the reconstruction error after 10 communication rounds under Passive Listener and Backdoor attack settings.  
Kaiming initialization consistently results in worse~(higher) reconstruction error than LeCun and Orthogonal initialization methods. 
This aligns with findings from previous work~\cite{he2015delving} that Kaiming initialization creates more favorable optimization landscapes for model training. 
From the attacker's perspective, this improved model conditioning leads to gradients that are harder to invert. 
These results indicate that the choice of model initialization influences the base error $e^{(0)}$ in Theorem~\ref{theorem:non_convex} and, therefore, privacy leakage. 

To evaluate the impact of the reconstruction function $\recon_{i}(\cdot)$, we compare the proposed reconstruction method with an adapted server-side attack method called \textit{inverting gradient attack~(InvertGrad)}~\cite{geiping2020inverting}. 
InvertGrad maximizes the cosine similarity between the observed and predicted gradients and incorporates a total variation prior to encouraging natural image structures. 
Figure~\ref{fig:recon_examples} shows qualitative comparisons between this baseline and our method under the Passive Listener setting at the first communication round of federated training. 
Quantitatively, the InvertGrad method gives mean RMSE values of $0.41$ and $0.43$ on MNIST and F-MNIST, respectively. 
In contrast, the proposed method yields mean RMSE values as low as $0.29$ and $0.33$. 
These results confirm our understanding of Remark~\ref{remark:initial_cond}, highlighting that the choice of reconstruction algorithm plays a crucial role in determining the final reconstruction quality.

\section{Conclusion}\label{sec:conclusion}

In this work, we have presented a novel threat model called the maliciously curious client. 
Our findings highlight a potential blind spot in the design of existing FL systems. 
Unlike traditional Byzantine clients that aim to corrupt training, the maliciously curious client seeks to extract sensitive information from peers by manipulating its own updates to amplify the gradient-based data reconstruction attack.
Through theoretical analysis and comprehensive experiments, we have shown that this attacker is able to achieve low reconstruction error, even when strong defenses such as differential privacy or robust aggregation are in place. 
Our findings demonstrate that privacy and robustness are deeply intertwined and some straightforward combinations of existing techniques do not necessarily yield stronger protection. 
In fact, in many scenarios, misconfigured defenses can amplify leakage beyond the ``no-protection'' baseline.
These results call for a rethinking of defense strategies in FL. 
Future work should aim to develop robust privacy-aware training frameworks that address both poisoning and the risk of data reconstruction from client-side attackers.


\IEEEpeerreviewmaketitle

\bibliographystyle{IEEEtran}
\bibliography{ref.bib}

\appendices

\setcounter{Lemma}{0}
\setcounter{Theorem}{0}

\section{ Proofs}\label{app:missing_proof}

We begin the proof of Theorem~\ref{theorem:non_convex} by giving a few observations. 

\begin{subequations}
\begin{align}
&\; \mathbb{E} \left\|\w[k+1]-\w[k]\right\|^2 \\
=&\; \mathbb{E}\left\| \frac{\eta}{M} \sum_{m=1}^{M} (\mathbf{n} + \tupdate[k,0]_{m}) \right\|^2  \\
=&\; \mathbb{E}\left\| \frac{\eta}{M} \sum_{m=1}^{M} (\mathbf{n} + \frac{1}{B}\sum_{i\in \mathcal{I}} \tupdate[k,0]_{m,i}) \right\|^2 \\
\overset{\cirone}{=} &\; \mathbb{E}\left\|\frac{\eta}{M} \sum_{m=1}^M  \mathbf{n} \right\|^2 + 
\mathbb{E}\left\| \eta   \frac{1}{MB} \sum_{m=1}^{M} \sum_{i\in \mathcal{I}} \tupdate[k,0]_{m,i} \right\|^2    \\ 
\overset{\cirtwo}{\leqslant} &\; \frac{\eta^2  d }{M B^2} C^2 \sigma^2 
+ \frac{\eta^2}{M^2B^2} \sum_{m=1}^M \sum_{i \in \mathcal{I}}  \mathbb{E}\| \tupdate[k,0]_{m,i} \|^2, \label{eq:interm0}
\end{align}
\end{subequations}
where $\cirone$ is based on the independence between noise and gradient, 
and $\cirtwo$ is from the i.i.d. sampling of the data on each client. 
Based on assumption~\ref{assumption:L_continuous}, we have
\begin{multline}\label{eq:L_smooth_result}
\mathbb{E} \left[f(\w[k+1]) - f(\w[k])\right]  \leqslant \\
- \eta \underbrace{\mathbb{E} \langle \nabla f(\w[k]), \update[k]\rangle }_{T_1} 
+ \frac{\eta^2 L_{g}}{2}  \expect \normsq{\update[k]}. 
\end{multline}
The term $T_1$ may be bounded as
\begin{subequations}
    \begin{align}
&\; \mathbb{E}\langle \nabla f(\w[k]) , \update[k] \rangle \\
= &\; \mathbb{E}\langle \frac{1}{M B} \sum_{m=1}^{M } \sum_{i \in \mathcal{I}} \update[k,0]_{m,i},  \frac{1}{M B} \sum_{m=1}^{M } \sum_{i \in \mathcal{I}} \tupdate[k,0]_{m,i} \rangle \\
\overset{\cirone}{=} &\; \frac{1}{M^2 B^2} \sum_{m=1}^M \sum_{i \in \mathcal{I}} 
\mathbb{E} \left[ \| \update[k,0]_{m,i}  \| \|\tupdate[k,0]_{m,i} \| \cos \theta \right] \\
\overset{\cirtwo}{=} &\; \frac{1}{M^2 B^2} \sum_{m=1}^M \sum_{i \in \mathcal{I}} 
\mathbb{E} \left[\| \update[k,0]_{m,i}  \| \|\tupdate[k,0]_{m,i} \| \right]\\ 
\overset{\cirthree}{\geqslant} &\; \frac{1}{M^2B^2} \sum_{m=1}^M \sum_{i \in \mathcal{I}} 
\mathbb{E}\| \tupdate[k,0]_{m,i} \|^2, \label{eq:inner_prod}
\end{align}
\end{subequations}
where $\cirone$ comes from i.i.d. sampling of the data on each client, with $\theta$ denoting the angle between $\update[k,0]_{m,i}$ and $\tupdate[k,0]_{m,i}$, 
$\cirtwo$ is based on the clipping operation given in \eqref{eq:clip_grad}, 
and $\cirthree$ is based on the following fact
\begin{equation}
     \left\{
    \begin{array}{ll}
        \| \update[k,0]_{m,i} \|  = \| \tupdate[k,0]_{m,i} \|, & \text{ if } \|\update[k,0]_{m,i}\| \leqslant C ,\\
        \| \update[k,0]_{m,i} \| > C \geqslant \| \tupdate[k,0]_{m,i} \|, & \text{ if } \|\update[k,0]_{m,i}\| > C. 
    \end{array}\right.
\end{equation}  
Plugging the results of \eqref{eq:interm0} and \eqref{eq:inner_prod} into \eqref{eq:L_smooth_result} yields 
\begin{multline}
\mathbb{E} \left[f(\w[k+1])\!-\!f(\w[k])\right] \leqslant \\ 
-\frac{\eta}{M^2B^2} (1 - \frac{\eta L_{g}}{2} ) \sum_{m=1}^M \sum_{i \in \mathcal{I}}  \mathbb{E}\| \tupdate[k,0]_{m,i} \|^2 + \frac{\eta^2 L_g d }{2 M B^2} C^2 \sigma^2.
\end{multline}
Let $\eta  = \frac{1}{L_g}$, we have 
\begin{multline}\label{eq:interm1}
    \sum_{m=1}^M \sum_{i \in \mathcal{I}} \mathbb{E}\| \tupdate[k,0]_{m,i} \|^2 \leqslant \\
    2 M^2B^2 \frac{ f(\w[k]) - f(\w[k+1]) }{\eta } 
    +  M d C^2 \sigma^2. 
\end{multline}
Substituting \eqref{eq:interm1} into \eqref{eq:interm0}, we obtain 
\begin{multline}\label{eq:weight_diff}
 \mathbb{E} \left\|\w[k+1]-\w[k]\right\|^2 \leqslant  \\ 
  \frac{2 \eta^2  d }{M B^2} C^2 \sigma^2  + 2\eta \left[ f(\w[k]) - f(\w[k+1]) \right]. 
\end{multline}
%
%
Now we move forward to prove Theorem~\ref{theorem:non_convex}. 

\begin{Theorem}
    Consider FedSGD with local DP against the client-side reconstruction attack. 
    Suppose that the conditions in Assumptions~\ref{assumption:L_continuous}--\ref{assumption:bounded_data} hold.  
    Let the learning rate $\eta = O\left(\frac{1}{L_{g}}\right)$, 
    then at the $k$th communication round, the expected reconstruction error satisfies the following inequality, 
\begin{equation}
    \textrm{\normalfont RMSE}(\hX^{(k)}) \! \leqslant \!\min\!\left\{  O\!\left(\rho_0^{(k)}\!\Delta^{(k+1)}\! + \! \rho_1^{(k)}\!\sigma \!+\! e^{(0)}\right) \!,\!  \frac{2\upsilon}{\sqrt{d_{\text{in}}}} \right\},
\end{equation}
where $\Delta^{(k+1)} = [ f(\w[0]) - f(\w[k+1])]^{1/2}$ is the objective gap, 
$\rho_0^{(k)} = 2 L_{\psi} \sqrt{2 L_g k}$ and 
$\rho_1^{(k)} = \frac{2 \sqrt{2d} L_{\psi}  C k }{\sqrt{M} B}$ are the time-variant terms,
and $e^{(0)} = \max_{i} \| \x_i - \recon_{i}(\update[0])  \| $ is the base error term. 
\end{Theorem}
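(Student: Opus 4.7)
The plan is to reduce the reconstruction-error bound to a quantity that can be controlled by the descent-type inequality~\eqref{eq:weight_diff} already established in the appendix preamble. I would start by applying a triangle inequality to each per-sample error, splitting it into a base-case contribution at round~$0$ and the drift incurred during training:
\begin{equation*}
\| \recon_{i}(\update[k]) - \x_i \| \leqslant \| \recon_{i}(\update[k]) - \recon_{i}(\update[0]) \| + \| \recon_{i}(\update[0]) - \x_i \|.
\end{equation*}
The second term is absorbed into $e^{(0)}$ by definition. For the first term, the Lipschitz assumption on $\recon_{i}$ in Assumption~\ref{assumption:L_continuous} gives a bound of $L_{\psi}\, \| \update[k] - \update[0] \|$, so the task reduces to controlling $\expect\| \update[k] - \update[0] \|$.

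Next, I would telescope the gradient trajectory via $\update[k] - \update[0] = \sum_{j=1}^{k}(\update[j] - \update[j-1])$ and invoke Cauchy--Schwarz to obtain $\| \update[k] - \update[0] \|^2 \leqslant k \sum_{j=1}^{k} \| \update[j] - \update[j-1] \|^2$. Each pairwise increment will be handled by the crude bound $\| \update[j] - \update[j-1] \|^2 \leqslant 2\|\update[j]\|^2 + 2\|\update[j-1]\|^2$, which sidesteps having to invoke $L_g$-smoothness at the level of the noisy, clipped stochastic gradients defined through~\eqref{eq:clip_grad}. Recognizing that $\update[j] = (\w[j] - \w[j+1])/\eta$, the inequality~\eqref{eq:weight_diff} immediately yields $\expect\| \update[j] \|^2 \leqslant \tfrac{2dC^2\sigma^2}{MB^2} + \tfrac{2}{\eta}[f(\w[j]) - f(\w[j+1])]$, and summing over $j$ telescopes the objective terms into $f(\w[0]) - f(\w[k+1]) = (\Delta^{(k+1)})^2$.

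Assembling these pieces with $\eta = \order{1/L_g}$ will give $\expect\| \update[k] - \update[0] \|^2 \leqslant O\!\big(\tfrac{d k^2 C^2 \sigma^2}{MB^2} + L_g k\,[f(\w[0]) - f(\w[k+1])]\big)$. Applying Jensen's inequality $\expect\|Y\| \leqslant \sqrt{\expect\|Y\|^2}$ together with $\sqrt{a+b} \leqslant \sqrt{a} + \sqrt{b}$ then recovers the two time-variant coefficients $\rho_0^{(k)}$ and $\rho_1^{(k)}$ as stated. Averaging the squared per-sample errors over the $N$ samples and the $d_{\text{in}}$ coordinates produces the RMSE bound. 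The competing term $2\upsilon/\sqrt{d_{\text{in}}}$ inside the $\min$ is an immediate consequence of Assumption~\ref{assumption:bounded_data}: since $\|\x_i\|_2 = \|\recon_{i}(\update[k])\|_2 = \upsilon$, the triangle inequality gives $\|\recon_{i}(\update[k]) - \x_i\|_2 \leqslant 2\upsilon$ uniformly, whose average yields the deterministic cap.

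The step I expect to be the most delicate is the pairwise increment bound. A sharper route would invoke $L_g$-smoothness to write $\|\update[j] - \update[j-1]\| \lesssim L_g \eta \|\update[j-1]\|$, but this identification only holds for $\nabla f$, whereas $\update[j]$ is an aggregation of mini-batch, per-sample clipped, and Gaussian-noised contributions. The crude two-term bound sidesteps this issue while, conveniently, still producing the $\sqrt{k}$ dependence in the final rate up to absolute constants absorbed into $O(\cdot)$. A secondary care point is that, under DP noise and clipping, the per-round decrement $f(\w[j]) - f(\w[j+1])$ is only guaranteed to be nonnegative after taking expectation via~\eqref{eq:L_smooth_result}; the derivation must therefore remain at the level of expectations throughout, consistent with the definition of $\mathrm{RMSE}(\hX^{(k)})$.
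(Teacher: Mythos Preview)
Your proposal is correct and follows essentially the same route as the paper's proof: triangle inequality plus Lipschitzness of $\recon_i$ to reduce to $\mathbb{E}\|\update[k]-\update[0]\|$, a telescoping sum with the Cauchy--Schwarz/Jensen bound $\|\sum_u (\update[u]-\update[u+1])\|^2 \le k\sum_u\|\update[u]-\update[u+1]\|^2$, the crude split $\|\update[j]-\update[j-1]\|^2 \le 2\|\update[j]\|^2+2\|\update[j-1]\|^2$, and then~\eqref{eq:weight_diff} to telescope the objective values. The only cosmetic difference is that the paper first records the $u=0$ case separately and, after telescoping, obtains $f(\w[0])+f(\w[1])-f(\w[k])-f(\w[k+1])$ before invoking the (expected) descent inequality to collapse it to $2[f(\w[0])-f(\w[k+1])]$; your single-shot telescoping with constants absorbed into $O(\cdot)$ achieves the same result.
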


\begin{proof}
We have 
\begin{subequations}
\begin{align}
    \text{RMSE}(\hX^{(k)}) &= \sqrt{ \frac{1}{d_{\text{in}} N} \mathbb{E} \| \Recon(\update) -  \X\|^2_{\text{F}} } \\ 
    & =  \frac{1}{\sqrt{ d_{\text{in}} N}} \sqrt{\sum_{i} \mathbb{E} \| \recon_{i}(\update) -  \x_i\|^2_{2} } \\
    &\leqslant \frac{1}{\sqrt{ d_{\text{in}} }  }  \max_{i}  \mathbb{E}\underbrace{ \| \recon_{i}(\update[k]) - \x_{i}  \|_2}_{T_2}. \label{eq:first_ineq}
\end{align}
\end{subequations}
The term $T_2$ may be bounded as 
\begin{subequations}
\begin{align}
&\;    \| \x_{i} - \recon_{i}(\update[k]) \| \\
 = & \; \| \x_{i} - \recon_{i}(\update[0]) + \recon_{i}(\update[0]) - \recon_{i}(\update[k]) \| \\ 
\overset{\cirone}{\leqslant} &\; \underbrace{\| \x_{i} - \recon_{i}(\update[0])\|}_{\triangleq e_{i}^{(0)}} + \| \recon_{i}(\update[0]) - \recon_{i}(\update[k]) \| \\
\overset{\cirtwo}{\leqslant} &\; e_{i}^{(0)} + L_{\psi} \underbrace{\| \update[0] - \update[k]  \|}_{T_3}, \label{eq:recon_err0}
\end{align}
\end{subequations}
where $\cirone$ holds due to triangle inequality and $\cirtwo$ comes from Assumption~\ref{assumption:L_continuous}. 
To bound term $T_3$, we first inspect 
\begin{subequations}
\begin{align}
  & \;  \mathbb{E} \| \update[0] - \update[1]  \|^2 \\
= & \; \frac{1}{\eta^2} \mathbb{E}\|  \w[0] - \w[1] - ( \w[1]  - \w[2] )\|^2 \\
\overset{\cirone}{\leqslant} & \; \frac{2}{\eta^2} \left[  \mathbb{E}\|\w[0]  - \w[1]\|^2  + \mathbb{E}\| \w[1] - \w[2] \|^2 \right] \\
\overset{\cirtwo}{\leqslant} & \; \frac{2}{\eta^2} \Bigg[ 2\eta \left(f(\w[0]) - f(\w[1])\right) +  \frac{2 \eta^2  d }{M B^2} C^2 \sigma^2  \\
& \; \; + 2 \eta \left(f(\w[1]) - f(\w[2])\right) +  \frac{2 \eta^2  d }{M B^2} C^2 \sigma^2   \Bigg] \\
\leqslant &\; \frac{4}{\eta} \left[ f(\w[0]) - f(\w[2]) +  \frac{2 \eta  d }{M B^2} C^2 \sigma^2 \right], \label{eq:g_diff}
\end{align}
\end{subequations}
where $\cirone$ is based on $ \| \mathbf{a} + \mathbf{b} \|^2 \leqslant 2(\| \mathbf{a}\|^2 + \|\mathbf{b} \|^2)$ and $\cirtwo$ is from \eqref{eq:weight_diff}. 
Substituting \eqref{eq:g_diff} into the squared form of $T_3$ yields  
\begin{subequations}
\begin{align}
& \;    \mathbb{E}\| \update[0] - \update[k]  \|^2 \\
= & \;  \mathbb{E}\|\sum_{u=0}^{k-1} (\update[u] - \update[u+1]) \|^2 \\
\overset{\cirone}{\leqslant} &\; k  \sum_{u=0}^{k-1} \mathbb{E}\| \update[u] - \update[u+1]   \|^2 \\
\overset{\cirtwo}{\leqslant} &\; \frac{4k}{\eta} \big[ f(\w[0]) + f(\w[1]) - f(\w[k]) - f(\w[k+1]) \\
&\; \; + \frac{2 k \eta  d }{M B^2} C^2 \sigma^2  \big] \\
\overset{\cirthree}{\leqslant} &\; \frac{8k}{\eta} \left[ f(\w[0]) - f(\w[k+1])  +  \frac{ k \eta  d }{M B^2} C^2 \sigma^2 \right], \label{eq:interm2}
\end{align}
\end{subequations}
where $\cirone$ is based on Jensen's inequality on squared norm, $\cirtwo$ is from \eqref{eq:g_diff} and $\cirthree$ is based on the descent lemma such that $f(\w[1])  \leqslant f(\w[0])$ and $f(\w[k+1]) \leqslant f(\w[k])$. 
Substituting \eqref{eq:interm2} into \eqref{eq:recon_err0} yields
\begin{subequations}
\begin{align}
&\;  \mathbb{E} \| \x_i - \recon_{i}(\update[k]) \| \\ \leqslant
&\;  e_{i}^{(0)} + \frac{2 L_{\psi} \sqrt{2k}}{\sqrt{\eta}}[ f(\w[0])  - f(\w[k+1]) + \frac{ k \eta  d }{M B^2} C^2 \sigma^2 ]^{\frac{1}{2}} \\
\overset{\cirone}{\leqslant} &\; e_{i}^{(0)} + \underbrace{2 L_{\psi} \sqrt{\frac{2k}{\eta}}}_{\triangleq \rho_0^{(k)}}  \Delta^{(k+1)} + \underbrace{\frac{2 \sqrt{2d} L_{\psi} k }{\sqrt{M} B} C}_{\triangleq \rho_1^{(k)}} \sigma,   
\end{align}
\end{subequations}
where $\cirone$ comes from subadditivity of the square root function. 
Here, $\Delta^{(k+1)} = [f(\w[0]) - f(\w[k+1])]^{1/2}$. 
Meanwhile, based on Assumption~\ref{assumption:bounded_data}, the reconstruction error is upper bounded by 
\begin{subequations}
\begin{align}
    \| \x - \recon_{i}(\update[k]) \| \leqslant \| \x \| + \| \recon_{i}(\update[k]) \| = 2 \upsilon. 
\end{align}
\end{subequations} 
Therefore, we have 
\begin{equation}
    \mathbb{E}\| \x - \recon_{i}(\update[k]) \| \leqslant \min\{ e_{i}^{(0)} \!+\! \rho_0^{(k)} \Delta^{(k+1)}\! + \! \rho_1^{(k)} \sigma, 2\upsilon \}.
\end{equation}
Applying the results to \eqref{eq:first_ineq}, we derive the following results
\begin{subequations}
\begin{align}
 &\;   \sqrt{d_{\text{in}}} \operatorname{RMSE}(\X^{(k)}) \\
 \leqslant &\; \min\{ e^{(0)} \!+\! \rho_0^{(k)} \Delta^{(k+1)}\! + \! \rho_1^{(k)} \sigma, 2\upsilon \},
\end{align}
\end{subequations}
where $e^{(0)} = \max_{i} \| \x_i - \recon_{i}(\update[0])  \| $. 
The choice of the learning rate $\eta = O(\frac{1}{L_g})$ gives the final result in $O(\cdot)$ form. 
 
\end{proof}

\section{Byzantine Attacks and Robust Aggregations}\label{app:robust_agg}
We review some state-of-the-art Byzantine robust defenses in the literature. 
The input $\update[k]_{\mathcal{C}}$ denotes the set of client gradients for aggregation. 

\highlight{Balance~\cite{fang2024byzantine}.}
Balance is an uninformed defense that relies on an evaluation dataset for reference. 
Suppose that the defender calculates a reference update $\update_{\text{ref}}$ based on the dataset $\mathcal{D}_{\text{ref}}$.
For each received gradient $\update[k,\tau]_{m}$, the defender chooses to accept it if the following inequality holds, 
\begin{equation}\label{eq:balance}
    \| \update[k]_{\text{ref}} - \update[k,\tau]_{m} \| \leqslant \phi \exp\left[- \kappa \lambda(k) \right] \| \update[k]_{\text{ref}} \|,  
\end{equation}
where $\phi>0$ is a scaling factor, $\kappa>0$ controls how fast the exponential function decays, and $\lambda(k) = k/K$ is a monotonic increasing function with respect to communication round index $k$. 
After a benign set $\mathcal{S}_{\text{B}}$ is selected based on $\eqref{eq:balance}$, the aggregation results can be obtained via FedAvg, 
\begin{equation}
    \widehat{\Phi}_{\text{Balance}}(\update[k]_{\mathcal{C}}; \mathcal{D}_{\text{ref}}) = \frac{1}{|\mathcal{S}_{\text{B}}|}\sum_{b \in \mathcal{S}_{\text{B}}} \update[k,\tau]_{b}. 
\end{equation}

\highlight{Frequency Analysis-Based Method (FreqFed)}~\cite{fereidooni2024freqfed}. 
The defender first applies the discrete cosine transform to the received gradients and obtains transformed coefficients. 
The coefficients are then processed with lowpass filtering to reduce the influence of high-frequency noise. 
Finally, hierarchical density-based spatial clustering (HDBSCAN) is applied to the pairwise cosine distance of the low-frequency coefficients. 
The largest cluster is selected as the benign cluster $\mathcal{S}_{\text{F}}$. 
An aggregated update is obtained by performing FedAvg on the selected cluster: 
\begin{equation}
    \widehat{\Phi}_{\text{FreqFed}}(\update[k]_{\mathcal{C}}) = \frac{1}{|\mathcal{S}_{\text{F}}|}\sum_{b \in \mathcal{S}_{\text{F}}} \update[k,\tau]_{b}. 
\end{equation}

\highlight{SignGuard~\cite{xu2022byzantine}.}
SignGuard utilizes the statistics of gradient signs to filter out malicious updates. 
The defender initially creates a set of indices, $\mathcal{S}_1$, by identifying and excluding outlier gradients. 
Concurrently, another set of indices, $\mathcal{S}_2$, is formed by selecting the largest cluster based on sign agreement. 
The updates are then aggregated using client indices from the intersection $\mathcal{S}_{\text{sg}} = \mathcal{S}_1 \cap \mathcal{S}_2$:
\begin{equation}
    \widehat{\Phi}_{\text{SignGuard}}(\update[k]_{\mathcal{C}}) = \frac{1}{|\mathcal{S}_{\text{sg}} |}\sum_{b \in \mathcal{S}_{\text{sg}}} \update[k,\tau]_{b}. 
\end{equation}

\highlight{Krum and Multi-Krum~\cite{blanchard2017machine}.}
Krum defender chooses one client update that is closest to its neighbors, according to the following operation:
\begin{equation}\label{eq:krum}
    \widehat{\Phi}_{\text{Krum}}(\update[k]_{\mathcal{C}}; A) = \underset{\update[k,\tau]_{i}}{\operatorname{argmin}} \sum_{i \rightarrow j}\left\|\update[k,\tau]_i-\update[k,\tau]_j\right\|^2,
\end{equation}
where $i \rightarrow j$ is the indices of the $M-A-2$ nearest neighbors of $\update[k,\tau]_{i}$ based on the Euclidean distance.
Multi-Krum extends Krum by selecting $c$ model updates and averages selected updates. 
Specifically, Multi-Krum performs Krum in~\eqref{eq:krum} $c$ times, each time selecting an update and moving it from the received update set $\update[k]_{\mathcal{C}}$ to the Multi-Krum candidate set.

\highlight{Divide and Conquer (DnC)~\cite{shejwalkar2021manipulating}.}
DnC is an informed defense that assumes knowledge of the number of attackers, $A$.
DnC defender first randomly selects a set of indices of coordinates to sparsify/subsample the gradients, keeping $s$ valid entries out of $d$.
The defender then constructs an $M \times s$ matrix $\mG$ by concatenating subsampled gradients and normalizing them to $\tilde{\mG}$ by subtracting the mean gradients. 
DnC detects the attackers by projecting these centered gradients in $\tilde{\mG}$ along their principal right singular eigenvector $\bm{v}$ and determining outlier scores. 
Given a filtering factor $c \in (0,1)$, gradients with the lowest outlier scores $M - c A$ are selected as benign updates.
Thus, a benign index set $\mathcal{S}_{\text{D}}$ can be constructed.  
The final update is computed by averaging the selected updates:
\begin{equation}
    \widehat{\Phi}_{\text{DnC}}(\update[k]_{\mathcal{C}}; A) = \frac{1}{|\mathcal{S}_{\text{D}}|}\sum_{b \in \mathcal{S}_{\text{D}}} \update[k,\tau]_{b}. 
\end{equation}

\section{Experimental Setup}\label{app:setup}

We conduct experiments on the Fashion MNIST (F-MNIST)~\cite{xiao2017fashion} and MNIST~\cite{deng2012mnist} datasets. 
F-MNIST is a collection of $28 \times 28$ pixels of grayscale images spanning 10 categories of clothing items, containing $60{,}000$ training and $10{,}000$ test samples. 
MNIST is a dataset with a similar data structure containing 10  categories of handwritten digits. 
For this dataset, we use the LeNet-5 architecture~\cite{el2017cnn}, a classical convolutional neural network composed of two convolutional layers and three fully connected layers. 
The neural network model is initialized with the PyTorch default implementation, i.e., Kaiming Uniform method. 
To simulate non-IID data in FL, we use the Dirichlet-based partitioning method proposed by Hsu et al.~\cite{hsu2019measuring}, 
$\bm{q}_m \sim \text{Dir}(\alpha)$, where $\bm{q}_m = [q_{m,1}, \dots, q_{m,C}]^{\top}$ belongs to the $(C-1)$-standard simplex.  
The client's local dataset is then populated such that category $k$ appears in proportion to $(100 \cdot q_{m,k})\%$ of the samples. 
We use $\alpha=0.1$ by default. 
We simulate an FL setup with $M = 10$ clients and full participation in each round. 
Unless otherwise specified, each client trains locally for $5$ steps, with a batch size of $B=100$ and a learning rate of $\eta=0.01$. 
For the reconstruction optimization, we use Adam optimizer and with learning rate $\eta$ selected from $\{10^{-4}, 3\times 10^{-3}, 10^{-3}, 3\times 10^{-3}, 10^{-2} \}$ via validation. 
We use a three-layer convolutional neural network for the autoencoder $E(\cdot)$. 
For the local DP method, we set clipping bound to $C=4$. 
The poisoning functions adopt the default parameters in the respective papers.  
The mean value of experimental results are reported over five independent runs.

\end{document}